\theoremstyle{plain}
\newtheorem{theorem}{Theorem} 
\newtheorem{proposition}{Proposition} 
\newtheorem{observation}{Observation} 
\newtheorem{corollary}{Corollary} 
\theoremstyle{definition}
\theoremstyle{remark}
\renewcommand{\paragraph}[1]{\textbf{#1}\,\,}
\newcommand{\squeeze}{\looseness=-1}
\newcommand{\red}[1]{{\leavevmode\color{red}{#1}}}
\newcommand{\blue}[1]{{\leavevmode\color{blue}{#1}}}
\newcommand{\cyan}[1]{#1} 
\newcommand{\darkgray}[1]{{\leavevmode\color[RGB]{120,120,120}{#1}}}
\newcommand{\todo}[1]{{\red{TODO: {#1}}}}
\newcommand{\extended}[1]{} 
\newcommand{\lotanadd}[1]{\cyan{{#1}}}
\newcommand{\naive}{na\"{\i}ve}
\newcommand\expect[2]{\mathbbm{E}_{#1}{\left[ {#2} \right]}}
\newcommand{\one}[1]{\mathds{1}{\{{#1}\}}}
\DeclareMathOperator*{\sign}{sign}
\DeclareMathOperator*{\argmax}{argmax}
\DeclareMathOperator*{\argmin}{argmin}
\renewcommand{\u}{{\bm{u}}}
\newcommand{\R}{\mathbb{R}}
\newcommand{\N}{{\cal{N}}}
\newcommand{\p}{{\bm{p}}}
\renewcommand{\r}{{\bm{r}}}
\renewcommand{\c}{{\bm{c}}}
\newcommand{\yhat}{{\hat{y}}}
\newcommand{\phat}{{\hat{p}}}
\newcommand{\phatvec}{{\hat{\p}}}
\newcommand{\pperp}{\p^{\perp}}
\newcommand{\rhohat}{{\hat{\rho}}}
\newcommand{\rhotilde}{{\tilde{\rho}}}
\newcommand{\rhoperp}{\rho^{\perp}}
\newcommand{\rhat}{{\hat{r}}}
\newcommand{\ptildevec}{{\tilde{\bm{p}}}}
\newcommand{\Pitilde}{{\widetilde{\Pi}}}
\newcommand{\ubar}{{\bar{u}}}
\newcommand{\ubarvec}{{\bar{\u}}}
\newcommand{\uhat}{{\hat{u}}}
\newcommand{\budget}{{b}}
\newcommand{\cost}{{c}}
\newcommand{\dist}{{D}}
\newcommand{\smplst}{{S}}
\newcommand{\loss}{{\ell}}
\newcommand{\reg}{{R}}
\newcommand{\shinge}{\loss_{\text{s-hinge}}}
\newcommand{\mhinge}{\loss_{\text{m-hinge}}}
\newcommand{\Betadist}{\mathrm{Beta}}
\newcommand{\thresh}{\tau}
\newcommand{\distance}{{\mathtt{dist}}}
\newcommand{\sort}{{\mathtt{sort}}}
\newcommand{\softargsort}{{\mathtt{softsort}}}
\newcommand{\softmax}{{\mathtt{softmax}}}
\newcommand{\cumsum}{{\mathtt{cumsum}}}
\newcommand{\br}{{\Delta}}
\newcommand{\brmrkt}{\br^{\mathtt{market}}}
\newcommand{\rev}{{r}}
\newcommand{\bmin}{b_{\mathrm{min}}}
\newcommand{\bmax}{b_{\mathrm{max}}}
\newcommand{\hnaive}{h_{\mathrm{naive}}}
\newcommand{\hmarket}{h_{\mathrm{market}}}
\newcommand{\umax}{u_{\mathrm{max}}}
\newcommand{\tempss}{T_\mathrm{softsort}}
\newcommand{\tempsm}{T_\mathrm{softmax}}
\newcommand{\dataset}[1]{{\texttt{#1}}}
\newcommand{\method}[1]{{\fontfamily{lmtt}\selectfont{{#1}}}}
\newcommand{\masc}{\method{MASC}}
\newcommand{\naivemthd}{\method{\naive}}
\newcommand{\shortmthd}{\method{short}}
\newcommand{\longmthd}{\method{long}}
\newcommand{\strat}{\method{strat}}
\newcommand{\adult}{\dataset{adult}}
\newcommand{\folktables}{\dataset{folktables}}
\newcommand{\welfare}{\mathtt{welfare}}
\newcommand{\burden}{\mathtt{burden}}
\begin{document}

\twocolumn[
\icmltitle{Learning Classifiers That Induce Markets}



\icmlsetsymbol{equal}{*}

\begin{icmlauthorlist}
\icmlauthor{Yonatan Sommer}{techioncs}
\icmlauthor{Ivri Hikri}{equal,techioncs}
\icmlauthor{Lotan Amit}{equal,techioncs}
\icmlauthor{Nir Rosenfeld}{techioncs}
\end{icmlauthorlist}

\icmlaffiliation{techioncs}{Faculty of Computer Science, Technion -- Israel Institute of Technology, Haifa, Israel}

\icmlcorrespondingauthor{Nir Rosenfeld}{nirr@cs.technion.ac.il}

\icmlkeywords{Machine Learning, ICML}


\vskip 0.3in
]



\printAffiliationsAndNotice{\icmlEqualContribution} 

\begin{abstract}
When learning is used to inform decisions about humans,
such as for loans, hiring, or admissions,
this can incentivize users to strategically modify their features,
at a cost,
to obtain positive predictions.
The common assumption is that the function 
governing costs is exogenous, fixed, and predetermined.
We challenge this assumption, and assert that
costs emerge as a \emph{result} of deploying a classifier.
Our idea is simple:
when users seek positive predictions, 
this creates demand for important features;
and if features are available for purchase,
then a market will form, and competition will give rise to prices.
We extend the strategic classification framework to support this notion,
and study learning in a setting where a classifier can induce a market for features.
We present an analysis of the learning task,
devise an algorithm for computing market prices,
propose a differentiable learning framework,
and conduct experiments to explore our novel setting and approach.
\squeeze

\end{abstract}

\section{Introduction}

Strategic classification \citep{hardt2016strategic,bruckner2012static}
considers learning in a setting where users 
can strategically manipulate their features 
to obtain positive predictions.
This applies to tasks such as loan approval, job hiring, school admissions, insurance claims, and welfare benefits,
in which the interests of users
(e.g., getting the loan or being hired) may not be aligned with 
the system's learning objective of maximizing accuracy.
The primary goal of strategic learning is to train classifiers that are robust to such responsive user behavior,
an idea that has gained much recent traction
(see Sec.~\ref{sec:related} for a partial list of related work).


A core assumption of strategic classification is that feature manipulation is \emph{costly},
i.e., that modifying $x$ to some other $x'$ incurs a cost to the user.
These costs are typically modeled via a \emph{cost function} $c(x,x')$ that 
underlies user decisions, and hence governs strategic behavior.
The vast majority of the literature considers costs as predetermined and
fixed;
even if unknown to the learner,
costs are still assumed to simply `exist'.
But where do costs come from,
what form do they take,
and how do they come to be?
Challenging the conventional assumption of exogenous costs,
our works sets out to propose and study alternative cost mechanisms.

One such alternative, 
and the focus of our paper,
is the idea that costs can materialize through \emph{market forces}:
the classifier creates demand,
suppliers set prices,
and users pay for items or services that aid them in securing positive predictions.
As an example, consider university admissions,
which often rely on standardized test scores (e.g., SAT).
Since these affect acceptance decisions,
students are incentivized to improve their scores;
this, in turn,
has created a (billion-dollar) market for preparation courses.
We posit that the price of such
courses is determined by the importance of standardized
tests as a feature in the decision rule for admission:
if a policy update
changes the relative weight of test scores, then
prices should adjust accordingly.%
\footnote{For broader discussion on changes in SAT policy as they relate to strategic behavior see \citet{liu2021test}.}
Note that such changes also affect who will---and even who \emph{can}---take such costly courses.
This, in turn, can affect the eventual composition of admitted students.
If a learned classifier is to be used to inform such decisions,
then learning must be aware of, and accountable for, the market it fosters.


Our paper formalizes this idea 
and applies it to the framework of strategic classification.
When users seek positive predictions, this creates demand for features that are important for classification;
and if features are available for `purchase' from sellers,
then a competitive market is formed.
The cost of obtaining features is then determined by their market price,
which is reflective of their market value,
and users can purchase any bundle of features whose price is within their budget.
Crucially, prices are not given nor predetermined;
rather, they depend on the learned classifier through how it shapes demand, as it relates to the entire data distribution.
This means that to obtain a strategically robust classifier,
learning must 
be able to anticipate 
the market it induces.
We refer to this as \emph{market-aware classification}.
To facilitate learning in this setting,
we (i) define `a market for features', as it relates to the learning task;
(ii) characterize an appropriate notion of price equilibrium;
and (iii) study the task of learning strategic classifiers that induce markets.
 




Learning in our market setting admits two key challenges.
First, since market prices rely on the aggregate demand of \emph{all} users,
the behavior of users becomes dependent through the market mechanism.
This is in sharp contrast to the standard setting in which users respond independently and the objective decomposes over training examples.
As we show, this can have a stark effect on learning, 
since even points that lie far from the decision boundary
can still have a significant impact on market prices,
and hence on the behavior, of others.
Fortunately, a useful property of equilibrium prices is that if the market is efficient, then prices reflect all relevant information.
In our setting, this implies that \emph{conditioned on prices},
the objective does decompose over users.
This allows us to adapt standard techniques for strategic learning with (independent) user responses to handle (dependent) market-induced cost functions.
Our second challenge is therefore to compute market prices effectively
and as part of the training pipeline.
For this, we first give an algorithm for computing prices exactly, and show that it is efficient.
We then propose a differentiable variant of the algorithm that computes `smooth' prices,
which enables
end-to-end optimization of the entire objective using gradient methods.
\squeeze

Using our approach,
and via both synthetic and semi-synthetic experiments,
we proceed to explore the effects of induced markets on learning and its outcomes,
Our main results here are twofold.
First, we show that markets can give rise to complex behavioral patterns that differ significantly from the conventional strategic setting.
Under a fixed cost function (e.g., some norm),
whether a point will move or not depends only on its distance from the decision boundary.
Conversely, since market prices are \emph{adaptive},
the cost-effectiveness of moving for any given point depends on the aggregate demand induced by the particular classifier.
This means that which points move and which don't can follow highly irregular patterns,
and in some cases counterintuitive.
One example is that `raising the bar' on acceptance by increasing the threshold---%
a common approach to combat strategic behavior---%
can cause \emph{more} points to cross.
Another surprising outcome is that unseparable data can 
\emph{become} separable.
This can occur when
(i) budgets correlate with labels,
and (ii) prices discriminate against low-budget users.
\squeeze


This drives our second result, which is that budgets play a distinctive role in shaping learning outcomes.
Our framework makes a distinction between what users have (i.e., their features) and their economic stature (via their budget).
Here we show that learning tends to favor users with larger budgets.
The mechanism for this is indirect:
if the classifier separates the data well
but in a way that negative points hold most of the aggregate budget,
then prices will be low, negative points will cross the decision boundary---and accuracy will be reduced.
Thus, since classifiers gain power over the induced market,
maximizing accuracy will often be achieved by 
learning classifiers under which positive predictions are indirectly
associated with high budgets.
This raises natural questions regarding socioeconomic equity---%
an important yet underexplored notion of fairness.
\squeeze

\extended{%
contributions: \\
- extend sc to a novel setting of market for features \\
- extend fixed independent costs to responsive dependent market costs \\
- compute prices \\
- differentiable learning framework \\
- synthetic experiments to understand affect of markets on prices an vice versa \\
- semi-synth experiments using real data, show role of budgets and implications on socioeconomic status
}

\extended{%
\blue{
As an abstraction, this formulation is sensible and convenient,
and often permits tractable learning.
However, modeling costs in this way has several significant drawbacks.
First, it implicitly considers costs not only as fixed, but as predetermined, and in a way that is entirely exogenous to the learning task.
Second, formulating costs as $c(x,x')$ implies that each user responds individually and independently, and is unaffected by the actions (or mere existence) of other users.
Third, learning must assume and commit to a particular functional form
(e.g., $\ell_2$-norm), and often requires full knowledge of its precise specification \cite{rosenfeld2023one}.
Together, these depict a world in which costs simply `exist', are common knowledge,
and affect users in isolation---%
a view which we believe is both restrictive and unrealistic.
\squeeze
}



}
\subsection{Related work} \label{sec:related}

\paragraph{Strategic classification.}
The field of strategic classification \citep{bruckner2012static,hardt2016strategic}
has gained much recent interest.
This has led to many advances both in theory \citep{sundaram2021pac,zhang2021incentive}
and in practice \citep{levanon2021strategic}.
The original formulation includes several strong assumptions, in particular regarding costs, which subsequent works have challenged or relaxed.
One line of research considers learning under unknown (but nonetheless fixed) costs,
including in the online \citep{dong2018strategic,ahmadi2021strategic},
multi-round batch \citep{lechner2023strategic},
and one-shot \citep{rosenfeld2024oneshot} settings;
under personalized costs \citep{lechner2023strategic,shao2024strategic}
and for general manipulation graphs
\citep{ahmadi2023fundamental,cohen2024learnability}. 
A related thread relaxes assumptions on user-side information,
but focuses on uncertainty regarding the classifier rather than costs
\citep{ghalme2021strategic,bechavod2022information,barsotti2022transparency}.
Another assumption is that users respond independently;
this has been relaxed by injecting dependencies through the utility function in a ranking task \citep{liu2022strategic},
or through the model class by making use of a network structure over users 
\citep{eilat2023strategic}.
In a recent work, \citet{hossain2025strategic} augment the cost function to include externalities, which entail dependencies.
Our work proposes that user behavior becomes dependent through a market mechanism
in which demand, and therefore prices, derive from the classifier.






\paragraph{Learning and markets.}
A large literature considers markets for data
\citep[e.g.,][]{agarwal2019marketplace,ghorbani2019data,chen2022selling}
or trained models \citep[e.g.,][]{chen2019towards,huang2023train}.
A recent line of work studies competition between platforms or service providers
\citep{ben2017best,ben2019regression,guo2022learning,jagadeesan2023competition,jagadeesan2024improved,shekhtman2024strategic,einav2025market}.
Here learning is used to elicit user preferences, e.g. towards making useful recommendations
\citep{hron2023modeling,eilat2023performative}.
In contrast, our setting considers how learning \emph{creates} a market, where the commodity is features.
The idea that features affect demand has been considered in
\citet{nahum2024decongestion}, but for market decongestion via feature selection
and in a different setting.
Closer to ours in spirit, 
\citet{hardt2022performative} measure the power of learning to shape outcomes through predictions
that cause a distribution shift.
However, they target a general performative setting in which neither a market nor user incentives are explicitly modeled.
\citet{epasto2018incentive}
study data-driven algorithms for mechanism design (e.g., auctions)
in which rational agents can misreport information (e.g., bid untruthfully).
Interestingly, they conclude that learning a mechanism is possible if misreporting bears a cost to users---as in strategic classification.


\begin{figure*}[t!]
\centering
\includegraphics[width=\textwidth]{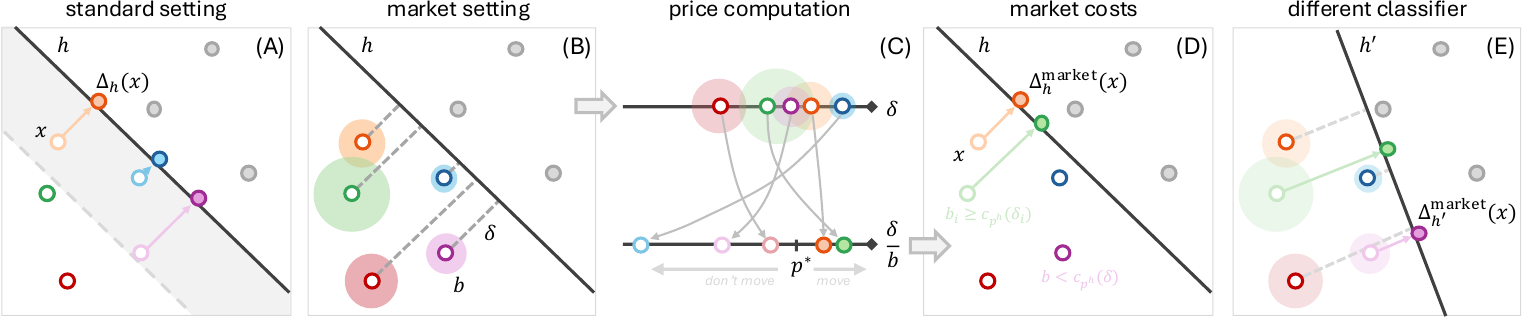}
\caption{%
\textbf{Strategic classification under market costs.}
\textbf{(A)} In the standard setting with predetermined norm costs,
each classifier $h$ induces a fixed region of points that will cross (light grey).
\textbf{(B)} In our market setting, points have budgets $b$ (circles),
and their distance to $h$ determines their demand $\delta$ (dashed lines).
\textbf{(C)} Demand for all users is projected onto a single demand space (top)
and then normalized by budgets (bottom), over which revenue-maximizing prices $\p^h$ are computed.
\textbf{(D)} Under market costs, points move if their budget permits buying sufficient features,
and do not move otherwise.
\textbf{(E)} A different classifier $h'$ creates different demand, and therefore induces different prices $\p^{h'}$. This can result in different points moving.
\squeeze
}
\label{fig:illust}
\end{figure*}

\section{Setup}

\paragraph{Strategic classification.}
In standard strategic classification,
users are described by features $x \in \R^d$, and have binary labels $y \in \{0, 1\}$.
Given a sample set of pairs $(x,y)$ drawn iid from some unknown joint distribution $\dist$,
the goal in learning is to find a classifier $h$ from some model class $H$ whose predictions $\yhat=h(x)$ obtain high expected accuracy on future samples.
Our focus will be on linear classifiers,
$h_{w,\thresh}(x)=\sign(w^\top x + \thresh)$.
%
The challenge in strategic learning is that users
can `game' the system by manipulating their features to obtain positive predictions.
In particular, given the classifier $h$, users are assumed to be rational and therefore modify their features via
the \emph{best-response mapping}:
\squeeze
\begin{equation}
\label{eq:br_general}
\br_h(x) = \argmax_{x'} \, h(x') - \cost(x,x')
\end{equation}
where $h(x') \in \{ \pm 1\}$ is their utility gained from prediction outcomes on the modified input,
and $\cost(x,x')$ is a cost function that governs the costs of changing $x$ to any other $x'$.
The goal is then to learn a classifier $h$ that is robust to such strategic responses, and the strategic learning objective is:
\begin{equation}
\label{eq:sc_learning_objective}
\argmin_{h \in H} \expect{\dist}{\one{y \neq h(x^h)}}, \quad
x^h = \br_h(x)
\end{equation}

\paragraph{Market setting.}
Our setting builds on the above to allow for the formation of a market for features.
To generally enable transactions,
we require two additional structural assumptions.
First, we assume features describe tangibles; this means that each $x_{[i]} \ge 0$, and that a larger value means having `more' of feature $i$.
Second, 
we assume each user has an (individualized) monetary budget $b \ge 0$, which limits the amount they are willing to spend
(or, equivlently, the value they attribute to obtaining a positive prediction).
This extends the joint distribution to be over tuples $(x,b,y) \sim \dist$.



Apart from these, the only distinction of our setup is that we use a particular cost function to express market costs.
We will make use of \emph{linear costs} \citep{hardt2016strategic};
given a vector $\p=(p_1,\dots,p_d)\ge 0$
and for $\delta=x'-x$,
define:
\begin{equation}
\label{eq:linear_costs}
\cost_\p(x,x') = \cost_\p(\delta) = \delta^\top \p
\end{equation}
If we consider $\delta$ as a bundle of features,
then we can interpret $\p$ as a vector of \emph{prices},
where each $p_i$ is the price of purchasing one unit of feature $i$.
We assume that users can buy features (but cannot sell),
so that $\delta \ge 0$; together with $\p \ge 0$, this ensures $\cost_\p(\delta) \ge 0$ always.\footnote{%
Note this circumvents an artifact of linear costs,
made apparent in \citet{hardt2016strategic},
which is that points can move `for free' in any direction (and hence to any distance) that is orthogonal to $w$.
}
Rather than assuming prices are fixed and given,
our main innovation is to let $\p$ be determined by forces of supply and demand.



\paragraph{Sellers and market prices.}
Our setting assumes there are 
$d$ distinct sellers,
$s_1,\dots,s_d$,
where each seller $s_i$ sells feature $i$ exclusively and
can determine its price $p_i$.
%
The goal of each seller 
is to maximize her \emph{expected revenue},
defined as:
\squeeze
\begin{equation}
\label{eq:revenue}
\rev_i(\p) = p_i \cdot \expect{\dist}{\delta_i(x;\p)}
\end{equation}
where $\delta_i$ is the amount of feature $i$ purchased by user $x$ at prices $\p$.
We consider a setting of unlimited supply (e.g., as in digital goods)
and in which users can purchase any real quantity of any feature,
$\delta_i \in \R_+ \,\forall i \in [d]$.

Note revenue to seller $s_i$ depends not only on its $p_i$,
which it controls,
but also on all \emph{other} prices, $p_{-i}$,
which are set by other sellers.
As such, 
we will assume that prices reach 
equilibrium,
denoted $\p^* = (p^*_1, \dots, p^*_d)$,
which is
revenue-maximizing in the sense that
no seller $s_i$ can improve her own revenue by changing $p_i$,
given that all other prices remain fixed. %
We refer to $\p^*$ as `market prices',
and will define them precisely in Sec.~\ref{sec:analysis}.
A crucial point is that market prices depend on 
the joint demand for all features,
aggregated over all users.
This, in turn, is shaped by the choice of classifier,
as we describe next.
\squeeze








\paragraph{Classifiers that induce markets.}
Since by Eq.~\eqref{eq:br_general}
utility to users derives from their prediction $\yhat=h(x)$,
any user $x$ who is classified as negative (i.e., has $h(x)=0$) 
will be interested in purchasing additional features
$\delta = (\delta_1, \dots, \delta_d)$
if this results in flipping her prediction to $h(x+\delta)=1$.
The demand set of a user therefore includes all $\delta$ for which:
\begin{equation}
\label{eq:demand_set}
w^\top (x+\delta) + \thresh \ge 0
\quad \text{and} \quad
\delta^\top p \le b
\end{equation}    
Overall demand is then given by aggregating all such $\delta$ over the collection of all users,
and market prices $\p^*$ are set by sellers to maximize revenue under this global demand set.

Notice how demand, and therefore prices,
depend on the interaction between the data distribution
(i.e., all pairs $(x,b)$) and the classifier $h$ (via $w$ and $\thresh$).
In this sense,
we get that
\emph{each choice of classifier induces a market}.
We will henceforth use $\p^h \coloneqq \p^*(h;\dist)$ to denote
the classifier-dependent equilibrium prices
that govern user responses in the market.
\squeeze


\paragraph{Strategic learning objective.}
Given a sample set $\smplst =\{(x_i,\budget_i,y_i)\}_{i=1}^m$ drawn iid from $\dist$, we will be interested in learning a classifier that maximizes expected accuracy under the market it induces.
This requires us to anticipate how users will respond:
for a given $h$,
plugging Eq.~\eqref{eq:linear_costs} into Eq.~\eqref{eq:br_general} 
gives the \emph{market best-response} mapping:
\begin{align}
\brmrkt_h(x,\budget) &= x + \delta_x, \nonumber \\
\delta_x &= \argmax_{\delta \ge 0} \,
h(x+\delta) - \frac{1}{\budget} \cost_{\p^h}(\delta) \label{eq:br_market}
\end{align}
which satisfies budget constraints implicitly.
Given Eq.~\eqref{eq:br_market} can be interpreted as each user having individualized costs
$c_x(\delta)=\frac{1}{b}c_{\p^h}(\delta)$.
Nonetheless, a crucial point is that when $h$ is learned,
individual best-responses $\brmrkt_h(x,\budget)$
become dependent on all other users,
since $\p^h$ depends on the distribution $D$ through the learned $h$.
Thus, users respond as a collective---not independently.
Fig.~\ref{fig:illust} illustrates the process in comparison to standard strategic classification.
\squeeze

Given Eq.~\eqref{eq:br_market}, our strategic learning objective is:
\begin{equation}
\label{eq:learning_objective_exp}
\argmin_{h \in H} \expect{\dist}{\one{y \neq h(x^h)}}, \quad
x^h = 
\brmrkt_h(x,\budget)
\end{equation}
which in practice we will replace with an appropriate empirical proxy
(Sec.~\ref{sec:method}).
Note $h$ is a function of features alone---and not of budgets;
thus, we assume budgets are observed at train time,
but at test time are private to users,
and affect their computation of $x^h$.
This makes $\brmrkt_h$ a special case of the generalized response model proposed in \citet{levanon2022generalized} which supports private information,
albeit with cross-user dependencies (which are unsupported).
\squeeze
\section{Market Prices: Analysis and Algorithm}
\label{sec:analysis}

Optimizing Eq.~\eqref{eq:learning_objective_exp} requires the ability to anticipate how users respond to the market.
By Eq.~\eqref{eq:br_market}, this can be achieved by computing induced prices $\p^h$.
Our first task is therefore to compute market prices for a given $h$.
We begin with analyzing the market,
and then give an exact pricing algorithm.



\paragraph{How users respond to prices.}
Given a classifier $h$ and a general price vector $\p$,
how will users behave?
To gain insight,
we will first consider the case of $w>0$,
and later generalize.
Notice that computing $x^h$ in Eq.~\eqref{eq:br_market} can be broken down into three steps:
First, compute $\yhat = h(x)$, and proceed only if $\yhat=-1$.
If so, then second, find the least-costly $\delta$ that gives a positive prediction, this by solving the LP:%
\footnote{Since users minimize costs, we can use an equality constraint.}
\squeeze
\begin{equation}
\label{eq:br_lp}
\delta_* = \argmin_{\delta \ge 0} \delta^\top \p
\quad \text{s.t.} \quad
w^\top (x + \delta) + \thresh = 0
\end{equation}
Third, apply $x^h = x+\delta_*$ iff budget permits,
i.e., $\delta_*^\top \p \le \budget$.

To understand $\delta_*$,
consider a change of variables in Eq.~\eqref{eq:br_lp}
using $z_i = \delta_i w_i$.
The LP can now be rewritten as:
\begin{equation}
\label{eq:br_lp_normalized}
\argmin_{z \ge 0} \sum_{i=1}^d \frac{p_i}{w_i} z_i
\quad \text{s.t.} \quad
\sum_{i=1}^d z_i = \kappa_x
\end{equation}
where the constant $\kappa_x = - w^\top x - \thresh$ is non-negative for
relevant points (i.e., having $\yhat = -1$).
This provides an alternative interpretation:
the user must allocate $\kappa_x$ mass across features,
using $z$, 
to minimize total cost-per-value,
where `values' correspond to entries of $w$.
This has a simple solution, which is to set $z_i=\kappa_x$
if $i$ attains the minimal ratio $p_i/w_i$, and $0$ otherwise.
If multiple features attain the minimum,
then these features are substitutable,
and so any allocation of $z$ among them is equivalently optimal.
In other words, users will purchase only the most cost-effective features,
but are indifferent within this set of features.
\squeeze

\extended{%
\todo{give dual as alternative explanation?}
}

\paragraph{How prices adjust to demand.}
Given the above, we next consider how sellers should set prices.
Notice how by Eq.~\eqref{eq:br_lp_normalized},
all user decisions depend on the ratios $p_i/w_i$,
and differ only in the constant $\kappa_x$.
Since all users buy only the most cost-effective features,
any $s_i$ whose ratio is \emph{not} minimal will receive zero market share.
Sellers therefore compete over who attains the minimal $p_i/w_i$.
Since sellers in our setting have no capacity constraints or production costs,
we assume sellers have foresight and so coordinate to prevent price collapse.%
\footnote{This is essentially Bertrand's paradox, for which we invoke the folk theorem to enable the formation of cooperative equilibrium.}
This gives the equilibrium condition:
\squeeze
\begin{equation}
\label{eq:equilibrium_ratio}
\forall \, i \in [d], \qquad
\frac{p_i}{w_i} =  \rho^* >0
\end{equation}
for $\rho^*$ that admits maximal total revenue, $\rev = \sum_i \rev_i$.
This implies a tight connection between the classifier and prices:
\begin{proposition}
Let $h(x)=\sign(w^\top x + \thresh)$, 
then there exist equilibrium market prices $\p^*$ that are proportional to $w$:
\squeeze
\begin{equation}
\label{eq:scalar_equilibrium_price}
\p^*(h;D) = \rho^* \cdot w 
\end{equation}
for some $\rho^* \in \R_+$ which also depends on $h$ and $\dist$.
\end{proposition}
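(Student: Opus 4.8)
The plan is to read off the proportionality directly from the equilibrium ratio condition of Eq.~\eqref{eq:equilibrium_ratio} and then verify that a revenue-maximizing scalar $\rho^*$ exists. Since the preceding analysis already shows that at any price equilibrium every active seller attains the common minimal ratio, the proportionality is essentially immediate: rearranging $p_i/w_i = \rho^*$ gives $p_i = \rho^* w_i$ for every $i$, i.e., $\p^* = \rho^* w$. The substantive content is therefore the \emph{existence} of a valid $\rho^* \in \R_+$ maximizing total revenue $\rev = \sum_i \rev_i$.

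First I would follow the analysis in restricting to $w > 0$ and disposing of features with $w_i \le 0$ separately: increasing $\delta_i \ge 0$ for such a feature cannot help cross the boundary, so it attracts zero demand and contributes zero revenue at any price, and we may set $p_i = \rho^* w_i$ consistently (with $p_i = 0$ when $w_i = 0$) without affecting revenue. With $w > 0$, I would reduce the market to a one-dimensional object parametrized by $\rho$. Using the substitution $z_i = \delta_i w_i$ and the solution to Eq.~\eqref{eq:br_lp_normalized}, a user with $\kappa_x = -w^\top x - \thresh > 0$ who crosses spends exactly $\rho \kappa_x$ in total---independently of how this splits among substitutable features---and crosses precisely when $\rho \kappa_x \le \budget$. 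Summing expenditures over users then writes total revenue as a single-variable function,
\begin{equation*}
\rev(\rho) = \rho \cdot \expect{\dist}{\kappa_x \cdot \one{0 < \kappa_x \le \budget / \rho}}.
\end{equation*}

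The key step is to show $\rev(\rho)$ attains its supremum at some finite $\rho^* > 0$. I would observe that $\rev(\rho) \to 0$ both as $\rho \to 0^+$ (each contribution scales like $\rho$) and as $\rho \to \infty$ (the affordability indicator vanishes once $\rho > \budget/\kappa_x$ for every buyer), while $\rev$ is strictly positive for intermediate $\rho$; any maximizer therefore lies in a bounded interval. For continuous $\dist$ the function is continuous and attainment follows by compactness. The main obstacle I anticipate is the empirical (discrete) case, where the indicator makes $\rev$ piecewise-linear with downward jumps as buyers successively drop out: here I would argue there are finitely many breakpoints $\rho = \budget_j/\kappa_j$ ranging over the sample, that $\rev$ is left-continuous at each, and that the supremum over the finitely many pieces is achieved---yielding a maximizing $\rho^*$. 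Finally I would confirm that the coordinated prices $\p^* = \rho^* w$ constitute an equilibrium: by Eq.~\eqref{eq:br_lp_normalized}, raising one's ratio above $\rho^*$ forfeits all demand, while the cooperative (folk-theorem) equilibrium invoked in the analysis sustains $\rho^*$ against undercutting, so no seller can profitably deviate.
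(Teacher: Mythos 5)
Your proposal is correct and follows the paper's own route: the proportionality is read off from the ratio condition in Eq.~\eqref{eq:equilibrium_ratio}, which the paper likewise treats as the entire content of the proposition, and your reduction of total revenue to a single-variable function of $\rho$ with breakpoints at $\budget_j/\kappa_j$ is exactly the structure the paper establishes separately in Thm.~\ref{thm:price_is_point} (empirical case) and in the appendix analysis of $r(u;f)$ (population case), so you are supplying the existence details the paper defers rather than deviating from it. One small inaccuracy: for a feature with $w_i<0$ you suggest setting $p_i=\rho^* w_i$, which is a negative price and violates the constraint $\p\ge 0$; since such a feature attracts zero demand at any admissible price, the right statement (and the one the paper makes in Appendix~\ref{appx:eq_prices}) is that there exists an \emph{equivalent} nonnegative equilibrium price vector yielding identical revenue and identical crossing behavior, not that $\rho^* w$ itself is literally admissible.
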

The particular equilibrium in Eq.~\eqref{eq:scalar_equilibrium_price}
will become highly useful for our method in Sec.~\ref{sec:method}.
Interestingly, for the purpose of learning, prices can be computed as in
Eq.~\eqref{eq:scalar_equilibrium_price}
even if some entries in $w$ are negative.
This is because for every $h$
there exists a price vector $\p' \ge 0$ such that
(i) $\p'$ is an equilibrium price,
and (ii) outcomes under $\p^*$ and $\p'$ are the same, i.e., the same set of points cross.
Intuitively, this is due to items being exchangeable; see Appendix~\ref{appx:eq_prices}.



\paragraph{Computing empirical market prices.}
Given a classifier $h$ and sample set $\smplst =\{(x_i,\budget_i,y_i)\}_{i=1}^m$,
we will be interested in computing revenue-maximizing market prices.
Because we only have a sample at hand,
our goal will be to compute optimal \emph{empirical market prices} $\phatvec$.
Applying Eq.~\eqref{eq:scalar_equilibrium_price} to 
the empirical distribution over $\smplst$,
we get that $\phatvec = \rhohat w$ for the scalar $\rhohat$ 
which maximizes total empirical revenue, denoted $\rhat$.
This is highly useful,
since the problem of computing equilibrium for the empirical market 
under a given $h$ reduces to optimizing over scalars $\rho \in \R$.






\begin{algorithm}[tb]
\caption{Exact empirical market prices}
\label{algo:exact_market_prices}
\begin{algorithmic}[1]
    \STATE \textbf{input:} classifier $h_{w,\thresh}$, sample set $S=\{(x_i,\budget_i,y_i)\}_{i=1}^m$
    \STATE {\textbf{initialize:} $\rev = 0$ (revenue), $U = 0$ (total units sold)}
    \FOR{$i=1,\dots,m$}
        \STATE $u_i \gets \distance^+(x_i;h)$
        \STATE $\ubar_i \gets u_i / \budget_i$
    \ENDFOR
    \STATE $(\ubar_{(1)},\dots,\ubar_{(m)}) \gets \sort(\ubar_1,\dots,\ubar_m)$
       \FOR{$i=1,\dots,m$}
        \STATE $p_i \gets 1 / \ubar_{(i)}$
        \STATE $U \gets U + \ubar_{(i)}$
        \IF{$p_i U > \rev$}
            \STATE $\rev \gets p_i U$
            \STATE $\phat \gets p_i$
        \ENDIF
    \ENDFOR
    \STATE \textbf{return:} $\phatvec = \phat \cdot w /\|w\|$
\end{algorithmic}
\end{algorithm}


By Eq.~\eqref{eq:scalar_equilibrium_price}
we can find market prices in the direction of $w$.
The demand of a user is the (directional) distance from $x$ to the decision boundary of $h$, measured in `units' $u \in \R_+$.
\squeeze
\begin{observation}
Let $h$ and $\smplst$, then for a given user $x$, 
and for any $\rho \in \R$,
her demand under prices $\p = \rho w$ is:
\begin{equation}
\label{eq:demand_as_distance}
u = \distance^+(x;h) = \max\left\{0, - \frac{w^\top x + \thresh}{\|w\|} \right\}
\end{equation}
\end{observation}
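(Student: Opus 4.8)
The plan is to reduce the demand computation to the cost-minimizing LP \eqref{eq:br_lp} specialized to $\p = \rho w$, and then read off the purchased quantity as a directional distance. First I would split on the sign of $w^\top x + \thresh$. If $w^\top x + \thresh \ge 0$ the user is already classified positive, so by the market best-response \eqref{eq:br_market} she has no incentive to move and $\delta_* = 0$; since the argument $-(w^\top x + \thresh)/\|w\|$ is then nonpositive, the $\max\{0,\cdot\}$ in \eqref{eq:demand_as_distance} correctly returns $u = 0$. It remains to treat a relevant (negatively classified) user, for whom $\kappa_x = -(w^\top x + \thresh) > 0$.

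For such a user I would substitute $\p = \rho w$ into the normalized LP \eqref{eq:br_lp_normalized}. Because every price-to-value ratio collapses to $p_i/w_i = \rho$, the objective becomes $\rho \sum_i z_i = \rho \kappa_x$ over the entire feasible set, so all bundles that reach the boundary cost the same and the user is indifferent among features (the substitutability already noted after \eqref{eq:br_lp_normalized}). The key consequence is that, although an optimal $\delta_*$ need not be unique, every optimizer satisfies the single active constraint $w^\top \delta_* = \kappa_x$, which pins down its projection onto the classifier direction regardless of which substitutable features are chosen.

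I would then define demand in \emph{units} as the displacement measured along the unit normal $w/\|w\|$, i.e.\ $u = (w^\top \delta_*)/\|w\|$, and substitute the active constraint to get $u = \kappa_x/\|w\| = -(w^\top x + \thresh)/\|w\| = \distance^+(x;h)$. Combining the two cases yields \eqref{eq:demand_as_distance}. I would stress that $\rho$ enters only as the multiplicative factor $\rho\kappa_x$ in the monetary cost and never in the geometric quantity $u$; this is precisely what justifies the ``for any $\rho \in \R$'' clause, since the number of units needed to cross is a property of $x$ and $h$ alone, while $\rho$ (equivalently the scalar price $p = \rho\|w\|$) only determines whether the budget $\budget$ covers the cost $\rho\kappa_x$---a gating that Algorithm~\ref{algo:exact_market_prices} handles separately via the budget-normalized quantities.

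The main obstacle is conceptual rather than computational: making ``demand in units'' well-defined despite the non-uniqueness of $\delta_*$. The resolution is that indifference among substitutable features leaves the projection $w^\top \delta_*$ invariant across all optimizers, so $u$ is unambiguous even when the purchased bundle is not. A secondary point to address is the restriction to $w > 0$ used when deriving \eqref{eq:br_lp_normalized}; for general $w$ I would appeal to the equilibrium characterization \eqref{eq:scalar_equilibrium_price} together with the exchangeability argument in Appendix~\ref{appx:eq_prices}, which guarantee an equivalent nonnegative price vector inducing the same crossing set, so the directional-distance characterization carries over unchanged.
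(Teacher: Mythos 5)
Your proposal is correct and follows essentially the same route as the paper, which presents this observation as an immediate consequence of the preceding LP analysis: under $\p=\rho w$ all ratios $p_i/w_i$ coincide, the equality constraint $w^\top\delta_*=\kappa_x$ pins down the projection of any optimal bundle onto $w$, and hence the units demanded equal the directional distance $\kappa_x/\|w\|$ independently of $\rho$, with the $\max\{0,\cdot\}$ handling already-positive points. Your explicit treatment of the non-uniqueness of $\delta_*$ and the deferral of general $w$ to the exchangeability argument of Appendix~\ref{appx:eq_prices} match the paper's (largely implicit) justification.
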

Here the max over 0 ensures that demand is considered only for relevant users, i.e., for which $h(x)=-1$.
This transition to units of demand
lays the ground for our algorithm.
\squeeze

\paragraph{Exact aglorithm.}
Algorithm~\ref{algo:exact_market_prices} provides pseudocode for an algorithm that efficiently computes the optimal (scalar) prices $\rhohat$ for given $h$ and $\smplst$.
Our key observation is that it suffices to work with 
\emph{units-per-budget}, defined as $\ubar_i = u_i/\budget_i$ for each user $x_i$.
The first steps are therefore to project demand onto $w$,
obtain all $u_i$, and normalize by $\budget_i$ to get $\ubar_i$.
Correctness of the algorithm follows from the next result:


\begin{theorem}
\label{thm:price_is_point}
Given (uni-dimensional) demand $\{(u_i,b_i)\}_{i=1}^m$,
the revenue-maximizing price is
$\rhohat = \ubar_{i^*}^{-1}$
for some $i^* \in [m]$.
\end{theorem}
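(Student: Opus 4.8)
The plan is to reduce the problem to a one-dimensional posted-price revenue maximization and then exploit the resulting piecewise-linear structure of revenue. By the preceding Proposition, equilibrium prices lie in the direction of $w$, so any candidate price is fully described by a single nonnegative scalar $p$ giving the cost per unit of (directional) demand, and by the Observation user $i$'s demand is the fixed quantity $u_i = \distance^+(x_i;h)$. The first thing I would record is how users respond to such a scalar price. Under $\p = \rho w$ all features share the same cost-per-value ratio, so the value of the LP in Eq.~\eqref{eq:br_lp} is proportional to the distance, and the least-cost bundle that moves user $i$ across the boundary costs exactly $p\,u_i$. By the budget test $\delta_*^\top \p \le b_i$, user $i$ therefore crosses iff $p\,u_i \le b_i$, i.e.\ iff $p \le 1/\ubar_i$ where $\ubar_i = u_i/b_i$. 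This is the key structural fact: participation is governed by a single per-user threshold, and the crossing set $A(p)=\{i : p \le 1/\ubar_i\}$ shrinks monotonically as $p$ increases.

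Second, I would write total revenue as $R(p) = p\cdot\!\sum_{i\in A(p)} q_i$, where $q_i \ge 0$ is the fixed quantity attributed to user $i$ when she crosses (the normalized demand $\ubar_i$ in Algorithm~\ref{algo:exact_market_prices}, though the argument is insensitive to the precise choice of $q_i$). The crucial observation is that $A(p)$ can change only at the $m$ values $\{1/\ubar_i\}_{i=1}^m$: on any open interval between two consecutive such breakpoints the crossing set is constant, so $R(p) = p\cdot C$ for a constant $C \ge 0$ and is hence nondecreasing in $p$ there.

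Third, I would conclude. Since $R$ is linear and nondecreasing on each maximal interval of constant $A(p)$, its supremum over that interval is attained at the interval's right endpoint, which is one of the breakpoints $1/\ubar_i$; for $p$ exceeding the largest threshold $A(p)=\emptyset$ and $R\equiv 0$, so nothing is gained beyond it. Taking the maximum of the finitely many breakpoint values then shows the revenue-maximizing price equals $1/\ubar_{i^*}$ for some $i^*\in[m]$, which is exactly the claim and matches the candidate prices $p_i = 1/\ubar_{(i)}$ enumerated by the algorithm.

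I expect the only delicate points to be bookkeeping rather than substance. I would need to handle ties among the $\ubar_i$ so that the breakpoint set and the half-open intervals are defined unambiguously and the marginal user is correctly included at equality $p = 1/\ubar_i$; to confirm that a maximizer exists and need not be sought as $p\to\infty$ (ruled out because revenue vanishes past the largest threshold); and to make precise that the ``cost to cross equals $p\,u_i$'' step is legitimate under the $\p=\rho w$ equilibrium, where the LP value depends only on the scalar $p$ and the distance $u_i$. None of these is conceptually hard, so the main work is stating the threshold and piecewise-linearity steps cleanly.
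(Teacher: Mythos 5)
Your proposal is correct and follows essentially the same argument as the paper's proof: you identify the per-user affordability threshold $p \le 1/\ubar_i$, observe that revenue is piecewise linear and increasing between the breakpoints $\{1/\ubar_i\}$ with drops as users become priced out, and conclude the maximum is attained at one of the breakpoints. Your remark that the argument is insensitive to the exact per-user quantity $q_i$ is a nice touch that also covers the minor ambiguity between summing $u_j$ versus $\ubar_j$ in the paper's revenue bookkeeping.
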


\begin{proof}
It suffices to show that the set of all local maxima of revenue
(as a function of $\rho = u^{-1}$)
correspond exactly to the set of points $\{\ubar_i^{-1}\}_{i=1}^m$.  
Assume w.l.o.g. that $\ubar_i^{-1}$ are ordered.
Then for any interval $(\ubar_i^{-1},\ubar_{i+1}^{-1})$, revenue is linear in $\rho$;
this is since for all $\rho$ in this interval, the set of users that purchase are precisely $j=1,\dots,i$, each purchasing $u_j$ units at price $\rho$.
Next, notice that at any $\rho=\ubar_i^{-1}$, increasing $\rho$ infinitesimally causes $i$ to \emph{not} purchase, since she can no longer afford her required units $u_i$ at budget $\budget_i$, and revenue exhibits a sharp drop.
Thus, revenue is discontinuous peicewise linear with increasing segments between the $u_i^{-1}$.
\squeeze
\end{proof}


Fig.~\ref{fig:rev_vs_price} illustrates the structure of revenue as a function of price.
Thm.~\ref{thm:price_is_point} implies that
it suffices to compute $\rev$ at prices 
$\rho_i=1 / \ubar_i$ for all $i \in [m]$, choose the maximizing $i^*$,
and set $\rhohat = \rho_{i^*}$.
We will henceforth refer to the user $i^*$ as the \emph{price setter}.
Sorting by $\ubar_i$ makes this process efficient:
at price $p_i$, the set of point who purchase are precisely $j$ for which $\ubar_j \le \ubar_i$.
Since revenue at $i$ is  $p_i U = p_i \sum_{\ubar_j \le \ubar_i} \ubar_j$,
we can update $U$ on the fly as a cumulative count of total units bought,
and multiply by price,
giving runtime of $O(m \log m)$.
\squeeze

One interesting observation is that prices are agnostic to scale:
if we multiply demand (or budgets) for all users by a constant,
then prices will scale inversely (proportionally).
Such "change of currency" has no effect on user responses.


\begin{figure}[t!]
\centering
\includegraphics[width=0.9\columnwidth]{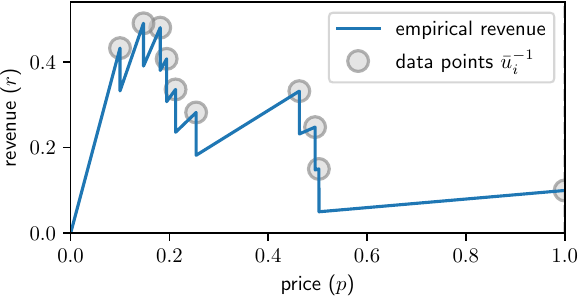}
\caption{
\textbf{Empirical revenue as a function of price.}
Revenue increases before each $\ubar_i^{-1}$
and drops immediately after,
implying the argmax is attained at some $i^* \in [m]$
(Thm.~\ref{thm:price_is_point}).
\squeeze
}
\label{fig:rev_vs_price}
\end{figure}

\extended{%



\blue{%
This reveals how prices are agnostic to scale:
if we multiply all $u$ by some factor of $\alpha$,
then by Eqs.~\eqref{eq:scalar_equilibrium_price} and \eqref{eq:demand_as_distance},
market prices would adjust as $\frac{1}{\alpha} \rho^*$,
with no change in outcomes.
}





\todo{actually about mapping back from $u$ to $\delta_*$!}
\blue{%
Mapping back from $\rho^*$ to vector prices can be made e.g. by assuming that ties between sellers are broken at random,
which gives $p_i = \rho^* w_i$.
Note this choice affects only how revenue is distributed across sellers,
whereas both users and the classifier remain agnostic to how ties are broken.
\squeeze
}
}

\section{Learning Approach} \label{sec:method}
We now turn to the question of how to learn an accurate classifier on the market distribution it induces.
Our general approach will be to 
follow the empirical risk minimization framework
and replace the expected risk in Eq.~\eqref{eq:learning_objective_exp}
with an empirical proxy objective over the sample set $\smplst$, namely:
\begin{equation}
\label{eq:empirical_objective}
\argmin_{h \in H} \frac{1}{m} \sum_{i=1}^m 
\loss(x_i^h, y_i; h) 
+ \lambda \reg(g), \,\,\,
x_i^h = \brmrkt_h(x_i,\budget_i)
\end{equation}
Here $\loss$ is a surrogate loss (e.g., hinge),
$\reg$ is an (optional) regularization term with coefficient $\lambda$,
and responses $\brmrkt_h$ are defined w.r.t. empirical market prices,
$\phatvec = \p^*(h;\smplst)$.
\squeeze

\paragraph{Challenges.}
There are several challenges to optimizing Eq.~\eqref{eq:empirical_objective}.
First, as in standard strategic classification, $\br$ is an argmax operator, which is non-differentiable and even discontinuous.
Second, in our market setting,
the objective no longer decomposes over examples,
since how each $x_i$ moves now depends on \emph{all} points in $\smplst$ 
through $\phatvec$.
Third, prices $\phatvec$ depend not only on the data,
but are also a function of 
$h$,
which is the target of optimization.
Finally,
it is unclear what are appropriate choices for $\loss$ and $\reg$
since strategic learning often requires using specialized proxy losses and regularizers.
\squeeze

\paragraph{Approach.}
Our solution will be to replace $\brmrkt_h$ with a differentiable proxy
that permits to take gradients `through the market'. 
First, notice that \emph{conditioned on prices},
user updates $x_i^h$ become independent;
this is precisely role prices play in any efficient market.
Next, we define $\loss$.
While there are no known strategic losses for linear costs---%
even with fixed parameters---%
surprisingly we show it is possible to adapt the
\emph{strategic hinge} \citep{levanon2022generalized}:
\squeeze
\begin{equation}
\label{eq:s-hinge}
\shinge(x,y;h) = \max\{0, 1 - y(w^\top x + \thresh + 2\|w\|)\}
\end{equation}
which applies to fixed 2-norm costs.
Although our costs are linear, 
since we work with market prices of the form $\p = \rho w$,
users are modelled as moving towards the decision boundary.
Thus, because prices adapt to the classifier $h$,
users respond to (directional) market prices $p^h$ `as if'
projected onto $h$, in the same manner as under (symmetric) 2-norm costs.
Note $\shinge$ penalizes all points according to the maximal moving distance of 2
(for $y \in \{\pm 1\}$), which is the same for all users.
The key difference in our setting is that users have individualized maximal distances:
for a given $\rho$, this is the amount of units that each user $i$ can buy,
namely $\budget_i/\rho$.
This gives our proposed \emph{market hinge loss}: 
\squeeze
\begin{equation}
\label{eq:m-hinge}
\mhinge(x,y;h,\rho) = \max\{0, 1 - y(w^\top x + \thresh + \frac{b}{\rho}\|w\|)\}
\end{equation}
A primary benefit of $\mhinge$ is that it does not include $\br$ explicitly.
Furthermore, it requires only the scalar market price $\rho$ (which depends on $h$).
Our final step is to replace $\rho$ with 
a \emph{differentiable market price} $\rhotilde$
as a smooth approximation.
This is achieved by making Algo.~\ref{algo:exact_market_prices} itself differentiable---for full details see Appendix.~\ref{appx:learning_algorithm}.
The relation between the market hinge and the 0-1 loss is explored in
Appendix~\ref{appx:mhinge_vs_01}.


\begin{figure*}[t!]
\centering
\includegraphics[width=0.98\textwidth]{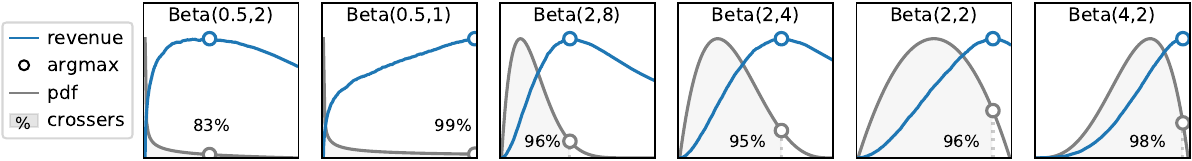}
\caption{%
\textbf{Demand and price setters.}
Demand distributions $q(u)$ for various $\Betadist$ distributions and $\budget=1$.
Shown are pdfs and revenue curves. Note how revenue-maximizing points
(`price setters') are extreme, suggesting that almost all points cross. 
\squeeze
}
\label{fig:beta}
\end{figure*}

\section{Learning in Markets: Exploratory Insights} \label{sec:synth}
In this section we use synthetic experiments to demonstrate
the basic mechanics underlying how learning creates markets,
and how induced markets affect learning outcomes.
As we show, such effects can be quite stark.
We begin with questions regarding fixed $h$, and then consider $h$ that are learned.
\squeeze

\begin{figure*}[t!]
\centering
\includegraphics[width=\textwidth]{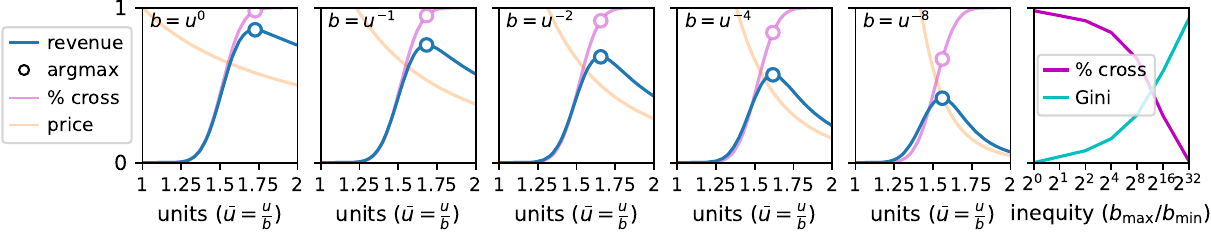}
\caption{%
\textbf{Demand under varying budgets.}
When budgets $\budget$ decrease as demand for units $u$ grows,
price setting points become less extreme.
However, this effect is mild, and only very high inequity (Gini$\approx$1)
helps to suppress mass crossing.
\squeeze
}
\label{fig:b_pow_u}
\end{figure*}

\subsection{Typical market behavior} \label{sec:fixed_h}
Given a classifier $h$, how will the market respond?
Let $q$ be the distribution over demand-budget pairs $(u,\budget)$ induced by $h$.
By Sec.~\ref{sec:analysis}, $q$ fully determines prices.
For our following analysis we will focus on $q$ that are simple, parametric, and well-behaved.
We first consider uniform budgets,
and then move to heterogeneous budgets that vary across users.

\paragraph{Uniform budgets.}
When $b=1$ for all users, and so $\ubar=u$, users are naturally `ordered' by their demand.
From Sec.~\ref{sec:analysis}, this means that if $u^*$ is the revenue-maximizing point
(i.e., the price setter),
then all users with $u \le u^*$ move, and all others do not.
This is similar to the standard setting (with fixed costs and uniform budget),
only that the threshold on who will move is now adaptive ($u^*$) rather than fixed ($\tau$).

Since only the closest users on the negative side of $h$ move,
the main question here is \emph{how many} of them will be deemed as sufficiently close by the market.
Fig.~\ref{fig:beta} shows revenue curves for several demand distributions,
depicting the subset of negatively-classified points, $p(x\mid \yhat=1)$.
Here we use various parameterizations of the expressive $\mathrm{Beta}$ family,
scaled to $[1,10]$.
The figure also shows for each distribution the price setter $u^*$
and the percentage of points that cross (i.e., all $u \le u^*$).
Although the distributions are quite diverse in shape,
market prices are typically low,
and price-setters lie mostly in extreme upper quantiles.
As a result, \emph{almost all points cross},
with over $95\%$ in most cases.
This effect is robust across many distributions---see Appx.~\ref{appx:empirical}.
\squeeze

To gain some intuition as to the underlying reason, 
the following result provides a simple sufficient condition:
\squeeze
\begin{theorem}
\label{thm:unique_rev_argmax}
Let $q_f(u)$ be a demand distribution with pdf $f(u)$.
Then if the function $f(u)u$ is either
strictly increasing, decreasing, or unimodal, it holds that:
\begin{enumerate}[leftmargin=2.2em,topsep=0em,itemsep=0.1em]
\item There is a unique revenue-maximizer $u^*$.
\item Let $\umax = \argmax_u f(u) u$, then $u^* \geq \umax$.
\end{enumerate}
\end{theorem}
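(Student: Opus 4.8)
The plan is to pass to a continuous formulation of the revenue and reduce both claims to the sign structure of a single auxiliary function. Following Algorithm~\ref{algo:exact_market_prices} and Theorem~\ref{thm:price_is_point}, setting the price so that the marginal (price-setting) user has demand $u$ amounts to charging $\rho = 1/u$, whereupon exactly the users with demand $\le u$ purchase, each buying their own $t$ units. Replacing the empirical sum by the density $f$ (with unit budgets), revenue as a function of the price setter's demand $u$ is
\begin{equation}
\rev(u) = \frac{1}{u}\int_0^u t\, f(t)\,\d t .
\end{equation}
Maximizing revenue over price is thus equivalent to maximizing $\rev(u)$ over $u$ in the support $[a,A]$, and I would first reduce the analysis to the sign of the numerator of $\rev'$.

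The key step is a clean differential identity. Differentiating gives $\rev'(u) = \phi(u)/u^2$ with $\phi(u) = u^2 f(u) - \int_0^u t f(t)\,\d t$, so that $\sign \rev'(u) = \sign \phi(u)$. Rewriting everything in terms of $g(u) = f(u)u$ — precisely the function the theorem assumes is single-peaked — yields $\phi(u) = u g(u) - \int_0^u g(t)\,\d t$, and a further differentiation collapses to the clean relation $\phi'(u) = u g'(u)$. Hence on $u \ge 0$ the sign of $\phi'$ equals the sign of $g'$. Because $g$ is monotone or unimodal, $g'$ changes sign at most once, and only from positive to negative; thus $\phi$ inherits this shape and is itself single-peaked. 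Combined with $\phi(a) = a^2 f(a) \ge 0$ at the left endpoint, a single-peaked function that starts nonnegative crosses zero at most once, and only downward — this gives a unique maximizer of $\rev$ (part 1), attained either at that crossing or at $A$. For part 2, I would note that $\phi' = u g' > 0$ wherever $g$ is increasing and that $\phi(a)\ge 0$, so $\phi$ is strictly positive on $(a,\umax)$ where $\umax = \argmax_u g(u)$; therefore $\rev$ is strictly increasing on $[a,\umax]$ and its maximizer must satisfy $u^* \ge \umax$.

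The main obstacle I anticipate is the careful handling of support boundaries and the degenerate monotone cases, rather than the core computation. I would need to justify the continuous revenue expression and differentiation under the integral sign, treat the left endpoint (where $\phi(a)$ may be zero or positive, and where $g$ need not be differentiable), and reconcile the ``interior zero'' and ``maximizer at the right endpoint'' outcomes into one uniqueness statement. The strictly increasing case ($u^* = \umax = A$) and the strictly decreasing case ($\umax = a$, so the bound is immediate) should both fall out of the unified identity $\phi' = u g'$ once these endpoint checks are in place; the last detail to nail down is that $\phi$ genuinely crosses zero rather than merely touching it, so the maximizer is a point and not an interval, which I would obtain from strict monotonicity of $g$ on the decreasing side.
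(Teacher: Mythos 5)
Your proposal is correct, and it shares the paper's overall skeleton: write expected revenue as $\frac{1}{u}\int_0^u t f(t)\,\d t$, differentiate, and reduce everything to the sign of the numerator $\phi(u) = u^2 f(u) - \int_0^u t f(t)\,\d t$ as governed by the shape of $D(u)=f(u)u$. Where you genuinely diverge is in the key step establishing that $\phi$ crosses zero at most once and only downward. The paper does this in two separate pieces: a direct integral inequality ($D(u')<D(u)$ for $u'<u$ on the increasing side implies $\phi(u)>0$ there), followed by a pictorial area-comparison argument (its Figure on areas 1--3) to show $\phi<0$ beyond the first zero. Your identity $\phi'(u) = u\,g'(u)$ with $g=fu$ replaces both pieces at once: $\phi$ inherits the single-peaked shape of $g$ directly, and combined with $\phi(a)=a^2f(a)\ge 0$ the single downward crossing is immediate, as is part 2 ($\phi>0$ on $(a,\umax]$, hence $\rev$ strictly increasing there). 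This is arguably cleaner and more rigorous than the paper's geometric argument, and it unifies the monotone and unimodal cases as you note. The one technical caveat is that your identity presupposes $g$ differentiable, whereas the paper's inequality argument needs only monotonicity of $D$; if you want to match that generality, observe that integration by parts gives $\phi(u)=\int_0^u t\,\d g(t)$, so $\phi$ is increasing (resp.\ decreasing) wherever $g$ is, without differentiating. With the endpoint bookkeeping you already flag, the argument is complete.
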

Since $f(u)u$ is unimodal under any log-concave $f$,
Thm.~\ref{thm:unique_rev_argmax} applies to many known distribution classes.%
\footnote{This includes: Normal, uniform, exponential, logistic, Laplace, Gamma, Beta, Weibull, Gumbel, Rayleigh, and Chi$^2$ distributions.}
Appendix~\ref{appx:theory} includes a proof and an in depth analysis of some examples.




\paragraph{Correlated budgets.}
When users vary in budgets (and so $u \neq \ubar$),
this can be thought of as `distorting' demand by scaling units as
$u \mapsto \frac{1}{\budget} u$.
Note this means that far-away points can now be closer, and close points can move far away, depending on $\budget$.
Potentially, this can lead to less extreme price setters if
the distribution becomes concentrated around smaller values;
this effect occurs mildly for $\mathrm{Beta}(0.5,2)$ in Fig.~\ref{fig:beta},
which is left-skewed.
Because demand is now over $\ubar=u/b$,
then if we think of $b$ as a function of $u$,
demand will be skewed if $b$ is sub-linear in $u$,
since this will ``push'' larger $\ubar$ increasingly further.
If this negative correlation is sufficiently strong,
then market prices should be higher, and we can expect fewer points to cross.
Fig.~\ref{fig:b_pow_u} shows revenue, prices, prices-setters, and the percentage of crossers for $b = u^{-\alpha}$ with $\alpha \in \{0,1,2,\dots,32\}$.
Here we use Gaussian $u$ scaled to $[1,2]$,
so that $\bmin=1$ for the smallest $u$,
and $\bmax=2^{-\alpha}$ for the largest.
Results show that increasing $\alpha$ does shift the price setter,
and reduces the number of crossers. 
However, this requires $\alpha$ to be large, and even for $\alpha=16$,
30\% of points still move.
\squeeze

\paragraph{Implications.}
If $h$ is such that most points are able to move, then this can have dire implications on predictive performance.
Because learning generally aims to separate points by their class $y$,
for any moderately accurate classifier
the majority of points that will participate in the market 
(i.e., have $\yhat=0$, and therefore $u>0$)
will be negative ($y=0$).
This means that for classifiers with high \emph{pre}-market accuracy,
we can expect performance to drop to as low as $\sim$50\% \emph{after} the market forms.
This ill effect can be somewhat mitigated if budgets correlate with distance to $h$,
but \emph{only if inequity is extremely high} within negative points.
Fig.~\ref{fig:b_pow_u} (right) shows the relation between the ratio of crossers and inequity in budgets,
measured in Gini units, as a function of $\alpha$;
in our example, high accuracy is possible only when the gap 
between the lowest and highest budgets is an extreme $2^{32}$-fold.
\squeeze


\subsection{Market-aware thresholds} \label{sec:market-aware_thresh}
Strategic movement by negative points is harmful to accuracy;
but positive points that move are actually \emph{helpful}
since they correct the classifier's mistakes.
In standard strategic classification, a useful strategy that exploits this idea
is to `raise the bar' by increasing $\thresh$ for a given $h=h_{w,\thresh}$.
Unfortunately, this idea does not easily transfer to a market setting,
because \emph{prices adapt to changes in $\thresh$}.
Nonetheless, varying $\thresh$ for a given $h$ can still have a positive effect.

\begin{figure}[t!]
\centering
\includegraphics[width=\columnwidth]{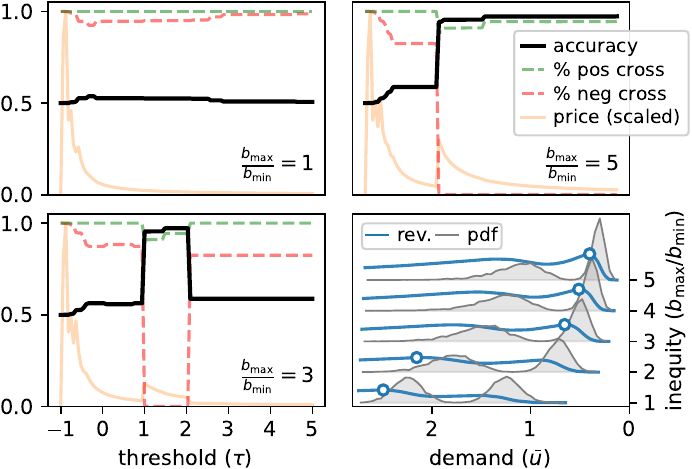}
\caption{%
\textbf{Varying threshold.}
For a mixture distribution of two class-conditional Gaussians,
$p(x\,|\,y)=\N(y\mu,\sigma)$, varying the threshold $\thresh$ results in surprising outcomes under induced market responses.
For uniform budgets (top left), there is no good solution.
When inequity in budgets $\budget$ is moderate (top right),
accuracy jumps to 1 once a critical point is reached.
When it is low (bottom left), this occurs only at a small interval.
Increased inequity distorts the demand distribution,
at some point enabling accuracy $\approx 1$ (bottom right).
\squeeze
}
\label{fig:thresh}
\end{figure}

Our next construction allows to accommodate for changes in $\thresh$.
Let $h$, and w.l.o.g. assume $h=h_{w,0}$.
Define $p(z)$ as the induced distribution of distances $z$
from points $x$ to the decision boundary of $h$.
For any $\thresh$, we can express the marginal over units as
$q_\thresh(u) = p(z \,|\, z \le \thresh)$.
We can now ask how $\thresh$ shapes demand.
Our next example sets $p(z)$ to be a mixture of two class-conditional Gaussians $p(z \,|\, y)$,
where $p(z \,|\, y=0)$ is scaled to $[-1,0)$ 
and $p(z \,|\, y=1)$ is scaled to $(0,1]$.
In terms of accuracy, we would like $\thresh$ to cause points from $p(z \,|\, y=1)$ to move,
and from $p(z \,|\, y=0)$ to stay unchanged.
Fig.~\ref{fig:thresh} shows prices, accuracy, and the ratio of crossers per class
for the range $\thresh \in [-1,5]$.
When budgets are uniform ($b=1$),
no threshold obtains accuracy above 55\%
\emph{despite the data being separable}.
This is because increasing $\thresh$ causes prices to decrease and remain low enough so that almost all points cross; essentially the same effect of Sec.~\ref{sec:fixed_h}.
However, when $b$ negatively correlates with $z$---even mildly---then it becomes possible to achieve high accuracy:
for $b$ that increases linearly with $z$ from $\bmin=1$ to $\bmax=5$,
we see that once $\thresh \approx 0.75$,
accuracy abruptly jumps from $\sim$0.5 to $\sim$0.95.
This is because at this threshold, the price setter shifts from being
an extreme point of $p(z\,|\,y=-1)$ to an extreme point of $p(z\,|\,y=1)$
(see ridgeline plot).
Note that this holds for \emph{any} $\thresh \ge 0.75$;
here the adaptivity of prices plays in favor of learning and provides robustness to the choice of $\thresh$.%
\footnote{This is in contrast to standard strategic classification which requires a particular $\thresh$ and can be highly sensitive to its choice.}
Interestingly, for a smaller $\bmax=3$,
we get that accuracy is high only for $\thresh \in [1,2]$;
once $\thresh$ becomes too large, the price returns to be set by an extreme negative point.

Fig.~\ref{fig:thresh} (bottom right) reveals an interesting phenomenon:
as inequality varies, the price setter jumps from an extreme point of one cluster to that of another.
We hypothesize this clustering behavior applies widely---see
Appendix~\ref{appx:cluster_hypothesis}.
\squeeze

\begin{figure}[t!]
\centering
\includegraphics[width=\columnwidth]{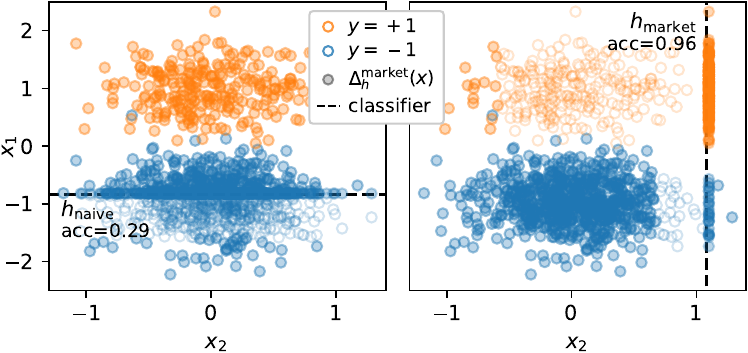}
\caption{%
\textbf{Market classifiers.}
Consider a distribution where $x_1$ enables class separation,
and $x_2$ is uninformative of $y$.
A \naive\ classifier that uses only $x_1$ is unable to prevent negative points from crossing, and attains low accuracy (left).
In contrast, a market-aware classifier that uses $x_2$ is able to capitalize on the variation in budgets to classify well (right).
\squeeze
}
\label{fig:2D}
\end{figure}

\subsection{Market-aware classifiers} \label{sec:market-aware_clsfr}
In terms of the market, control over $h$ allows the learner to `choose'
which demand distribution $q$ to work with:
the choice of $w$ determines $p(z)$, and $\thresh$ induces $q(u)$.
This gives learning much power over which users will be in the market,
as well as which of them will cross.
Interestingly, an $h_{w,\thresh}$ that is effective on the induced market
need not be accurate on raw data $(x,y) \sim \dist$.
For example, $w$ can focus weight on features that are entirely uninformative of $y$.
Our next construction demonstrates an extreme version of this idea.

Let $d=2$, and consider $(x_1,x_2) \sim D$
composed of per-feature class-conditional Gaussians $\dist(x_i \,|\, y)$.
The first feature is $x_1 \sim \N(\mu y, \sigma)$,
which allows to separate the classes (we use $\mu=1$ and $\sigma=0.15$).
The second feature is $x_2 \sim \N(0, \sigma)$,
i.e., has the same distribution under both classes,
as so is by definition  inseparable.
We set $\dist(y=0)=0.75$ and $\dist(y=1)=0.25$.
Here we let $\budget$ depend on labels as $\budget = 1+4y$.
Fig.~\ref{fig:2D} shows the behavior of two classifiers on this data:
$\hnaive$ which uses only $x_1$ (left),
and $h_{\mathrm{market}}$ which uses only $x_2$ (right).
The idea of $\hnaive$ follows that of Sec.~\ref{sec:market-aware_thresh}:
separate the raw data well, and then tune $\thresh$ on the market.
Here we see that this approach breaks down completely:
the best it can achieve is $0.29$ accuracy,
since it cannot prevent the bulk of negatives from crossing.
In contrast, $\hmarket$ exploits the market to \emph{create} separability
over the otherwise ineffective $x_2$.
This is the optimal market-aware classifier.
The correlation between $\budget$ and $y$ results in a demand distribution
$q$ which clusters the positive $\ubar$ close to $h$,
and pushes negative $\ubar$ sufficiently far.
This results in almost only positive points crossing, and accuracy reaches 0.96.
In Appendix~\ref{appx:market_causes_separable} we show this effect can be even more extreme.
Note learning will not always tend to favor $h$ in which labels and budgets correlate---see Appendix~\ref{appx:budgets_and_labels}.
\squeeze

\extended{
\todo{implications? learning prefers inequity in budgets; b not in input to h, if b corr y then learning can elicit y by incentivizing users to invest effort and `reveal' their y. if b anti-corr y, then this makes separation impossible (the market is not symmetric - can't just use 1-h, since users want and move towards yhat=1)}
}

\section{Experiments} \label{sec:experiments}


We now turn to demonstrate how our market-aware strategic learning framework performs empirically on real data with simulated market behavior.
We use two datasets common and publicly available datasets
and adapt them to our strategic market setting:
(i) the \adult\ dataset, %
showed here,
and using \texttt{capital\_gain} feature as a proxy for budgets $\budget$;
and (ii) the \folktables\ dataset, %
deferred to Appendix~\ref{appx:folktabels_results}.
For further details see Appendix.~\ref{appx:data}.
Code is publicly available at
\url{https://github.com/BML-Technion/MASC}.


\paragraph{Setup.}
Our method of market-aware strategic classification (\masc)
optimizes the proxy objective proposed in Eq.~\eqref{eq:empirical_objective}%
---see Appendix~\ref{appx:optimization} for details.
We compare to two baselines:
(i) \naivemthd, a conventional non-strategic classifier;
and (ii) \strat, a strategic classifier that anticipates user responses
(to fixed prices) but does not account for how the market adapts.%
\footnote{This draws on the idea of the main algorithm in \citet{hardt2016strategic}.}
The latter is done by training \naivemthd, computing optimal prices $\p$,
setting $w=\p$,
and then optimizing $\thresh$ to maximize accuracy on a held-out set.
We measure short-term performance on current prices $\p$ (\shortmthd)
and long-term performance after prices re-equilibrate at $\p^h$ (\longmthd).
We also show the accuracy of \naivemthd\ on non-strategic data (for which it is consistent) as a benchmark.

Our main question regards the effect of budget distribution on learning and its outcomes. 
The distribution of budgets $b$ as found in the data is highly skewed, 
with a ratio of $\bmin$ to $\bmax$ of approximately 1:1000,
which depicts a state of high inequality.
To balance this, we consider the effects of redistributing budgets
to attain \emph{lower} inequality, achieved by rescaling budgets to
reduced ranges $[1,2^\alpha]$ for $\alpha=2,\dots,10$,
where the largest value of $\alpha_{\mathrm{true}}=2^{10} \approx 1000$ matches the original data
(star marker).
For each such $k$,
we measure test accuracy,
welfare (normalized by total budget),
and social burden \citep{milli2019social}
(normalized by total budget of positives). 
We also measure the ratio of positive points that move (high is good)
and of negative points that don't (low is good).
All results are averaged over 10 random 
splits, and we report mean and standard errors.
\squeeze

\paragraph{Results.}
Fig.~\ref{fig:adult} shows results across budget redistributions
at different inequity scales $\frac{\bmax}{\bmin} = 2^2,\dots,2^{10}$.
In terms of accuracy (left), 
all methods improve as budget gaps increase, but at different rates.
\masc\ clearly outperforms \naivemthd\ by a large margin.
It also outperforms \strat\ in the long term for almost all scales $\alpha>2^4$
(which includes $\alpha_{\mathrm{true}}$)
and shows similar performance for the lowest scales.
Interestingly, this is where \strat\ reveals a large gap between short-term performance (for which it is optimized)
and long-term outcomes (as a result of prices adapting to updated demand).

Fig.~\ref{fig:adult} (top right) depicts the percentage of users that cross per class.
Note the abrupt drop in negative crosses at $\alpha \approx 2^4$,
which matches the sharp performance increase for \masc.
This shows that larger scales make it possible to find a classifier for which 
negative points are mostly unable to cross.
A possible explanation is our clustering hypothesis:
once accuracy is sufficiently high,
the price setter jumps from an extreme negative point
(under which almost all points move) to an extreme positive point
(under which mostly positives move).
In terms of social outcomes,
welfare (normalized) begins reasonably high, but reduces to $\sim 0.5$,
and does not fully recover.
Burden remains flat until scale $2^4$, and then gradually rises,
but remains relatively low throughout.
\squeeze

\begin{figure}[t!]
\centering
\includegraphics[width=\columnwidth]{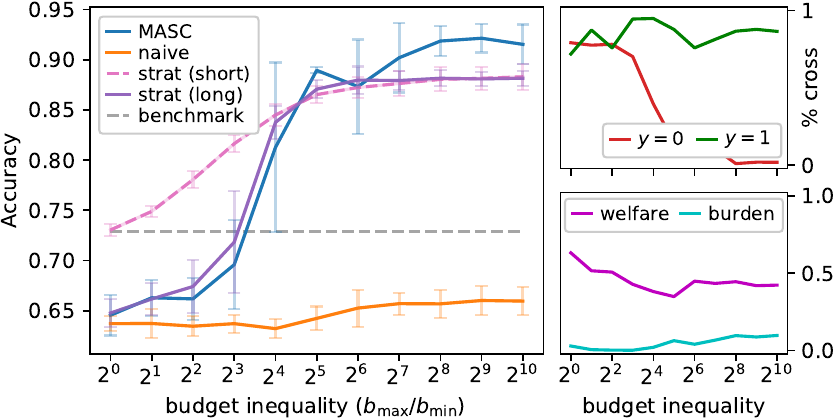}
\caption{%
\textbf{Results on \adult.}
\textbf{(Left:)}
Accuracy across reduced budget inequity scales,
relative to the data's original highly skewed scale of
$\frac{\bmax}{\bmin} \approx 2^{10}$ (star).
\textbf{(Right:)}
For \masc,
per-class ratio of crossers (top; high is good for $y=1$,
low is good for $y=0$),
welfare, and social burden (bottom).
}
\label{fig:adult}
\end{figure}

\section{Discussion}

The use of learned models to inform decisions about humans
has become common practice.
But when those very humans also take interest in prediction outcomes, 
conventional learning tools no longer necessarily apply.
This paper advances the idea that when users seek to obtain certain predictions,
learning inevitably becomes a driver of demand.
When this creates an opportunity for profit, 
it is only natural to expect that a market will form.
Learning classifiers that induce markets poses unique challenges as a learning task.
Our paper takes a first step to address these,
and so targets a particular market setting 
and pursues a basic understanding of it.
But there is of course a plethora of other market settings to explore at this new intersection of machine learning and markets.
Nonetheless, the idea that learning can drive economic outcomes has broader implications to consider.
One example is the question of how learning influences social welfare, as it relates e.g. to market efficiency.
Another example is the question of information asymmetry and the capacity of learning to exploit its informational advantage.
Given the growing influence of learning on our lives,
such questions merit careful thinking and much deliberation.
Our hope is therefore not only to spark interest,
but to also motivate discussion on these important and timely topics.


\section*{Acknowledgements}
The authors would like to thank Inbal Talgam-Cohen and Moran Koren for their dilligent advice and insightful feedback. 
This work is supported by the Israel Science Foundation
grant no. 278/22.
\section*{Impact Statement}
Our paper sets out to study the interplay between learning classifiers and the markets this process can facilitate.
We believe that the impact of prediction on economic outcomes can be significant and widespread when machine learning tools are used in social contexts.
In the market model we propose, the choice of classifier is modeled as affecting both users and sellers: it inadvertently determines who must invest to be classified as positive (i.e., receive the loan or get the job), what this will cost, and which sellers will profit.
These forces arise naturally through how the market coordinates supply and demand.
But whereas the mechanics of conventional markets are well understood both in theory and practice, we believe that the role of learning in markets, and the impact that learning can have, has so far been insufficiently explored.

An understanding of how learning creates and affects markets can be used to advance efficient and fair trade, foster equal opportunity, and promote social welfare.
It can also be used to gain insight as to how learning-driven markets should be regulated and by what means.
But such knowledge and tools should be used with care,
as they can potentially serve to drive markets to undesired outcomes.
One example is economic inequity, which can be exacerbated by learning,
as our results suggest can happen.
Another example is information asymmetry:
Our stylized market model assumes perfect information and efficient prices.
But in a reality where learned models have access to an unparalleled amount of data---certainly more than is accessible to users or sellers---the learning system gains a distinct informational advantage.
It is widely recognized that such settings can lead to the exploitation of consumers and even to market collapse \citep[e.g.,][]{akerlof1978market}.
We hope that our work serves to encourage fruitful discussion on these important topics.

It is also important to note that the market model we study is simple and draws on many assumptions, such as unlimited supply, a fixed number of exclusive sellers, and no externalities.
Results regarding market outcomes, both theoretical and empirical, should therefore be taken under this light.
At the same time, we hope this motivates researchers in both machine learning and economics to deepen our understanding of learning and markets 
in broader and more realistic economic settings.

\bibliography{refs}
\bibliographystyle{icml2025}

\newpage
\appendix
\onecolumn

\section{Market prices -- additional theoretical and empirical results}

\subsection{Equilibrium prices} \label{appx:eq_prices}

Consider some classifier $h = h_{w,\thresh}$, and assume there exists some $i \in [d]$ for which $w_i>0$. W.l.o.g. let $i=1$.
The following result states that computing market outcomes under $h$ can be done by (i) projecting demand onto the direction of $w$ (i.e., by computing distances from negatively classified points to the decision boundary of $h$),
(ii) computing a (scalar) revenue-maximizing price $\rhoperp$, and
(iii) moving points iff $b > \rhoperp$.
This suggests that even for $w$ with some negative entries, market outcomes can be computed "as if" users move directly towards $h$, i.e., as determined by "prices" $\pperp = \rhoperp \cdot w$.

The idea is to show that outcomes are mostly invariant to the actual direction in which points move.
Because features are substitutable, we can artificially constrain purchases to a single feature (here, $x_1$) and maintain the same outcomes.
This results from the same user taking on the role of prices setter irrespective of the chosen direction of movement.

\begin{proposition}
Given $h$, let $\rho^1$ by the revenue-maximizing price assuming there is demand only for feature $x_1$, i.e., points can only buy $x_1$ and therefore only move along this dimension.
Define $\p^1= (\rho^1,0,\dots,0)$.
Then:
\begin{enumerate}
\item Total revenue will be the same under $\pperp$ and $\p^1$.
\item The same set of users will cross $h$ under $\pperp$ and $\p^1$.
\end{enumerate}
\end{proposition}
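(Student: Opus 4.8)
The plan is to reduce both pricing scenarios to the \emph{same} one-dimensional revenue-maximization problem, so that they are forced to share an optimizer. First I would write the cost each relevant user $x$ incurs to cross $h$ under each price vector. Writing $\kappa_x = -w^\top x - \thresh \ge 0$ for the required increase in $w^\top x$, note that under $\pperp = \rhoperp w$ all features share the ratio $p_i/w_i = \rhoperp$, so by the substitutability argument of Sec.~\ref{sec:analysis} \emph{every} feasible bundle reaching the boundary costs exactly $\delta^\top \pperp = \rhoperp\, w^\top\delta = \rhoperp \kappa_x$, independently of how the purchase is split across features. In particular, routing the whole purchase through $x_1$ (legal precisely because $w_1>0$ lets a single feature supply all of $\kappa_x$) does not change this cost. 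Under $\p^1 = (\rho^1,0,\dots,0)$ the user can \emph{only} buy $x_1$, so she must set $\delta_1 = \kappa_x/w_1$ and pays $\rho^1\kappa_x/w_1$.

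Next I would collapse the crossing decision in both cases to a single threshold. User $x$ crosses under $\pperp$ iff $\rhoperp \kappa_x \le \budget$, i.e. iff $\kappa_x/\budget \le 1/\rhoperp$, and under $\p^1$ iff $\rho^1\kappa_x/w_1 \le \budget$, i.e. iff $\kappa_x/\budget \le w_1/\rho^1$. Both have the form $\kappa_x/\budget \le T$; since $\kappa_x/\budget = \|w\|\,\ubar_x$ this is equivalently a threshold on the units-per-budget $\ubar_x$, which is exactly the quantity over which Thm.~\ref{thm:price_is_point} operates. I would therefore reparametrize each market by its marginal-crosser threshold $T$, setting $\rhoperp = 1/T$ in the first case and $\rho^1 = w_1/T$ in the second, and let $C(T) = \{x : \kappa_x \ge 0,\ \kappa_x/\budget \le T\}$ be the induced crossing set, which is identical across scenarios for a common $T$.

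The crux is then to show revenue-as-a-function-of-$T$ coincides. Total revenue under $\pperp$ is $\rhoperp \sum_{x \in C(T)} \kappa_x = \tfrac{1}{T}\sum_{x \in C(T)}\kappa_x$ (empirically), which is well-defined despite the non-unique allocation because $\sum_i w_i\delta_i = \kappa_x$ for every crossing user regardless of tie-breaking. Under $\p^1$ it is $\rho^1 \sum_{x\in C(T)}\delta_1(x) = \tfrac{w_1}{T}\sum_{x\in C(T)}\kappa_x/w_1 = \tfrac{1}{T}\sum_{x\in C(T)}\kappa_x$ — the same function of $T$. Hence the two problems have the same optimal threshold $T^*$; by Thm.~\ref{thm:price_is_point} it is attained at a single \emph{price-setting} user $i^*$, identical in both markets. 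This gives claim (2) immediately, since the crossing set is $C(T^*)$ in both, and claim (1) since the maximal revenue values agree; as a byproduct the optimal prices relate by $\rho^1 = w_1\rhoperp$.

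The main obstacle I anticipate is bookkeeping around well-definedness rather than any deep argument: under $\pperp$ the purchased bundle is not unique, so I must argue that both total revenue and the set of crossers are invariant to how demand is split across the substitutable features (both depend on $\delta$ only through $w^\top\delta = \kappa_x$), and I must dispatch the edge cases $\kappa_x < 0$ (non-participants, contributing nothing and crossing in neither market). The one genuine assumption to use carefully is $w_1 > 0$, which guarantees the single-feature reparametrization $\delta_1 = \kappa_x/w_1$ is both nonnegative and capable of moving any relevant user all the way to the decision boundary.
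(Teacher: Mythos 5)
Your proof is correct and takes essentially the same route as the paper's: both arguments reduce the two markets to a single one-dimensional revenue-maximization problem by observing that demand under $\pperp$ and under $\p^1$ differ only by the multiplicative constant $w_1/\|w\|$, which cancels in both the revenue expression and the crossing condition, so the optimizer, the maximal revenue, and the set of crossers coincide. Your reparametrization by the threshold $T$ on $\kappa_x/\budget$ and your explicit handling of non-uniqueness of the purchased bundle and of the $w_1>0$ assumption are slightly more careful than the paper's index-based argmax computation, but the underlying cancellation argument is identical.
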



\begin{proof}
The $\ell_2$ distance of a point $x$ to the hyperplane defiend by $h$ is given by:
\[
u = \left|\frac{w^\top x + \thresh}{||w||}\right|
\]

Contrarily, the distance of a point to a hyperplane in the direction of $x_1$ alone is:
\[
u^1 =\left|\frac{w^\top x + \thresh}{w_1}\right|
\]
which we think of $u^1$ as units of demand in the direction of $x_1$.
Together, we get the relation:
\[
u = u^1\frac{|w_1|}{||w||}
\]

Plugging the above into the definition of maximal revenue gives:
\begin{align*}
    r &= \argmax_i \frac{b_i}{u_i} \sum_{j:\frac{b_j}{u_j} \ge \frac{b_i}{u_i}} u_j \\
    &= \argmax_i \frac{b_i}{u^1_i\frac{|w_1|}{||w||}} \sum_{j:\frac{b_j}{u^1_j\frac{|w_1|}{||w||}} \ge \frac{b_i}{u^1_i\frac{|w_1|}{||w||}}} u^1_j\frac{|w_1|}{||w||} \\
    &= \argmax_i \frac{b_i}{u^1_i} \sum_{j:\frac{b_j}{u^1_j} \ge \frac{b_i}{u^1_i}} u^1_j \\
    &= r^1
\end{align*}
where $r^1$ is the total revenue when only purchases of $x_1$ are permitted.
The equality holds since scaling does not change the argmax,
and by multiplying both sides of the inequalities in the summation by $\frac{|w_1|}{\|w\|}$. Thus, total revenue remains the same.

Note also that by switching from $u$ to $u^1$, albeit scaling,
the summation is taken over the same set of points.
In other words, a different direction might entail a different currency,
but the market will maintain its operation.
Thus, the set of points that move also remains the same.

\end{proof}

For a given $w$, one implication is that for any direction $i$ in which $w_i>0$,
the corresponding $\p^i$ is an equilibrium price.
When multiple such $i$ exist, any $\p_i$ is an equilibrium price.
When $w>0$ in all entries, $\p^*$ is also an equilibrium price.
Our result above states that market outcomes are equivalent under any of these prices, and so in principle we are free to work with any of them.
The second convenient implication is that even for $w$ having negative entries,
and for which $\p^*$ is not well-defined, we can nonetheless work with $\pperp$.

A minor comment is that although we assume only that items and prices are positive (and not $w$), it is possible to constrain $w>0$ as part of the learning algorithm. 
Another possibility is to permit negative prices: these can be interpreted as paying to reduce $x$ (e.g., pay the gym to lose weight), but requires constraining $x^h \ge 0$ (or permitting negative $x$).
For $w_i=0$, we can interpret this as seller $i$ not permitted (or able) to sell at all.


\subsection{Expected prices} \label{appx:empirical}
In Sec.~\ref{sec:fixed_h} we have empirically shown that for many `natural' distributions over demand, there is: (i) a unique revenue-maximizing point $u^*$,
and (ii) this point tends to materialize at extreme quantiles of the distribution.
Here we show that this phenomenon holds more broadly.
Fig.~\ref{fig:beta_dists-all} shows pdf-s, revenue curves, and price setters for a wide range of parameterizations of the $\Betadist$ distribution. These include symmetric, left-skewed, right-skewed, concave, bell-shaped, and uniform distributions.
For all distributions considered, the price setter is at least in the 80-th percentile, and typically much more extreme.

\begin{figure}[h!]
\centering
\includegraphics[width=0.7\textwidth]{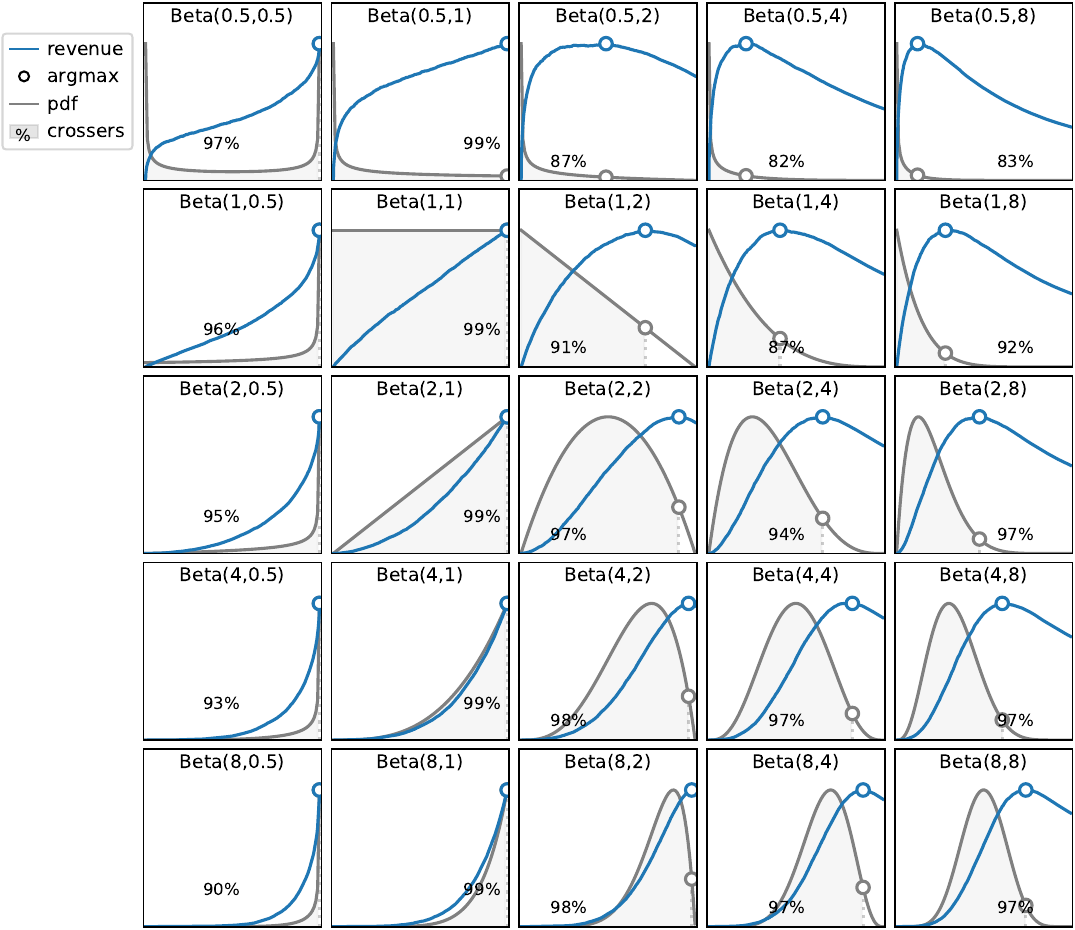}
\caption{%
Price setters are extreme across a wide range of $\Betadist$ distributions.
}
\label{fig:beta_dists-all}
\end{figure}

\extended{%
\todo{revenue-setting z-score for normal distributions}
}

\subsection{Theoretical insight} \label{appx:theory}

To complement the observations above,
this section aims to provide a theoretical underpinning for the questions of (i) when do unique revenue-maximizing prices exists,
(ii) why is this prevalent across many natural distributions,
and (iii) how extreme are price setters (e.g., in terms of quantiles).
We begin with some general claims and sufficient conditions,
and then present some examples of particular distribution classes which we analyze in depth.

\subsubsection{Analyzing revenue}
Consider a continuous distribution over (univariate) revenue 
defined by a pdf $f(u)$.
We assume that $f$ has support on $[0,t]$ for $t \in \R_+ \cup \{\infty\}$,
and consider uniform budgets $b=1$ for all users. 
This is made w.l.o.g.---see below.
Recall that expected revenue $r(u;f)$ is defined as the sum of demands of 
all users $u' \leq u$, divided by $u$.
This can be rewritten as:
\begin{equation*}
r(u;f) = \frac{1}{u} \expect{u}{u' | u' \leq u} = \frac{1}{u} \int_{0} ^ {t} {f(u') \cdot \mathds{1} \{ u' \leq u \} \cdot u' du'} = \frac{1}{u} \int_{0}^{u} {f(u') \cdot u' du'}
\end{equation*}

To determine whether the expected revenue has a maximum, we compute the derivative of $r(u; f)$ with respect to $u$:
\begin{equation*}
r'(u; f) = \frac{d}{du} r(u; f) = \frac{1}{u} f(u) \cdot u - \frac{1}{u^2} \int_0^u f(u') \cdot u' \, du' = f(u) - \frac{1}{u^2} \int_0^u f(u') \cdot u' \, du'
\end{equation*}

Denote by $u^*$ the revenue maximizer w.r.t. $f$ (if such exists). That is, $u^*=argmax_u r(u; f)$.
The following observations will be useful:
\begin{observation}
\label{obs:r' always positive}
If $ r'(u; f) > 0 $ through $0 \leq u \leq t$, then $u^*$ is unique and is attained at $t$.
\end{observation}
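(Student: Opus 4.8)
The plan is to reduce the claim to the elementary fact that a function with a strictly positive derivative on an interval is strictly increasing there, and that a strictly increasing function attains its maximum uniquely at the right endpoint of its domain. So the whole statement follows from single-variable calculus once the function $r(\cdot;f)$ is set up correctly.

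First I would check that $r(\cdot;f)$ is well-defined and continuous on the closed interval $[0,t]$, including at the left endpoint, where the factor $1/u$ is a priori problematic. Since the integration variable satisfies $u' \le u$ on the domain of integration, we have $\int_0^u f(u')\,u'\,du' \le u \int_0^u f(u')\,du' = O(u)$, and in fact $O(u^2)$ when $f$ is bounded near $0$; either way $r(u;f) \to 0$ as $u \to 0^+$. Thus $r$ extends continuously with $r(0;f)=0$, and I may treat it as a genuine continuous function on $[0,t]$.

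Next, from the hypothesis $r'(u;f) > 0$ throughout, I would deduce strict monotonicity via the fundamental theorem of calculus: for any $0 \le u_1 < u_2 \le t$,
\[
r(u_2;f) - r(u_1;f) = \int_{u_1}^{u_2} r'(s;f)\,ds > 0,
\]
so $r(u_1;f) < r(u_2;f)$, i.e. $r(\cdot;f)$ is strictly increasing on $[0,t]$. Strict monotonicity immediately yields both parts of the claim: the maximum over $[0,t]$ is attained at $u=t$, and it is the \emph{unique} maximizer because every $u < t$ satisfies $r(u;f) < r(t;f)$. Hence $u^* = t$ uniquely.

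The main obstacle---really the only subtlety---is the boundary behavior. One must verify the continuous extension at $u=0$ so that the monotonicity-to-maximizer argument applies on a genuinely closed interval, and interpret the hypothesis ``$r' > 0$ through $0 \le u \le t$'' so as to cover the endpoints. The case $t = \infty$ also needs a brief remark: there $r$ is strictly increasing without a finite maximizer, so the statement should be read as $u^* = t$ in the limiting sense (the supremum of revenue is approached as $u \to t$). Everything else is routine.
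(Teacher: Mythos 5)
Your proposal is correct and matches the reasoning the paper implicitly relies on: the paper states this observation without proof, treating it as the elementary fact that a strictly positive derivative forces $r(\cdot;f)$ to be strictly increasing, hence uniquely maximized at the right endpoint. Your added care about the continuous extension at $u=0$ and the $t=\infty$ case are sensible refinements but do not change the argument.
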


\begin{observation}
\label{obs:r' always negative}
If $ r'(u; f) < 0 $ through $0 \leq u \leq t$, then $u^*$ is unique and is attained at $0$.
\end{observation}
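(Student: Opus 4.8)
The plan is to reduce the statement to an elementary fact about strictly monotone functions, mirroring the argument for Observation~\ref{obs:r' always positive}. The hypothesis hands us the sign of $r'(u;f)$ on the whole interval, and the derivative formula already derived,
\[
r'(u;f) = f(u) - \frac{1}{u^2}\int_0^u f(u')\,u'\,du',
\]
confirms that $r(\cdot\,;f)$ is differentiable on $(0,t]$. So the first step is to promote the pointwise negativity of the derivative into global strict monotonicity: since $r'(u;f) < 0$ throughout $(0,t]$, the mean value theorem yields $r(u_1;f) > r(u_2;f)$ for every pair $0 < u_1 < u_2 \le t$. That is, $r(\cdot\,;f)$ is strictly decreasing on its domain, which is the only structural fact I need.

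From strict monotonicity both conclusions follow directly. For the location of the maximizer, a strictly decreasing function is maximized at the infimum of its domain, so revenue is pushed toward $u \to 0^+$; reading $u^*=0$ as this left endpoint (exactly as Observation~\ref{obs:r' always positive} reads the increasing case as maximized at the right endpoint $t$) gives the claimed $u^*=0$. For uniqueness, strictness is precisely what is required: if $u_1 \neq u_2$ then $r(u_1;f) \neq r(u_2;f)$, so no two distinct points can simultaneously attain the maximum, and $u^*$ is unique.

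The one point I would flag as needing care — more a subtlety than a genuine obstacle — is the behavior at the boundary $u=0$. Since $r(u;f)\to 0$ as $u\to 0^+$ for any integrable $f$, the ``maximum at $0$'' is a limiting/boundary statement rather than an interior stationary point, and the supremum over $(0,t]$ is approached rather than attained at a finite admissible price. I would therefore phrase the conclusion operationally: among all admissible prices $\rho=1/u$, revenue strictly increases as $u$ decreases (i.e.\ as $\rho$ grows), so the optimizing $u$ is driven to the left boundary and is unique by strict monotonicity. This makes the observation the exact mirror image of Observation~\ref{obs:r' always positive}, and the two together then serve as the extreme cases in the sign-change analysis underlying Theorem~\ref{thm:unique_rev_argmax}.
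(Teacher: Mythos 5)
Your argument is correct and is exactly the (implicit) reasoning the paper relies on: the observation is stated without proof, and the intended justification is precisely that a strictly negative derivative forces $r(\cdot\,;f)$ to be strictly decreasing on the interval, so the maximizer sits at the left endpoint and is unique by strict monotonicity. Your caveat that $u=0$ is a boundary/limiting point (since $r(u;f)\to 0$ as $u\to 0^+$) is a fair subtlety the paper glosses over; it is harmless here because this case is essentially degenerate for supports starting at $0$ and becomes fully non-vacuous only for supports $[a,t]$ with $a>0$, which the paper addresses separately.
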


Setting $r'(u; f) = 0$, we find:
\begin{align*}
f(u) &= \frac{1}{u^2} \int_{0}^{u}{f(u') \cdot u' du'} \\
f(u) u^2 &= \int_{0}^{u}{f(u') \cdot u' du'}
\end{align*}

\subsubsection{Proof of Theorem~\ref{thm:unique_rev_argmax}}
Theorem~\ref{thm:unique_rev_argmax} states that sufficient conditions for the existent of a unique argmax for expected revenue are that $f(u)u$
is either strictly increasing, strictly decreasing, or strictly unimodal.
We now turn to its proof.

\begin{proof}
Denote by $D(u)$ the function $D(u)=f(u)u$. We split the proof to two distinct cases:

\textbf{Case I: $D(u)$ contains one maxima point.} Let $\hat{u}$ denote the maxima point of $D(u)$, meaning that $\uhat = \argmax_{u}D(u)$. For $u \in [0, \hat{u}]$, $D(u)$ is increasing, and for $u \in [\hat{u}, t]$, $D(u)$ is decreasing. Therefore, for $0<u_1<u_2<\uhat$, it holds that $D(u_1) = f(u_1)u_1 < f(u_2)u_2 = D(u_2)$. Thus, for all $u < \hat{u}$:
\begin{equation*}
D(u)u = f(u)u^2 > \int_{0}^{u} f(u')u' , du'
\end{equation*}

This implies:
\begin{equation*}
r'(u; f) = f(u) - \frac{1}{u^2} \int_0^u f(u')u' , du' > 0
\end{equation*}

If $f(u)$ is continuous on $[0, t]$, then $r'(u; f)$ is also continuous on $[0, t]$. Therefore, if no $u$ satisfies $f(u)u^2 = \int_{0}^{u} f(u')u'\, du'$, then $r'(u; f) > 0$ throughout $[0, t]$. By observation \ref{obs:r' always positive}, $u^* = t$ maximizes the revenue of the distribution.

Suppose there exists a point $u^*$ such that:
\[
f(u^*){u^*}^2 = \int_{0}^{u^*} f(u') \cdot u' \, du'.
\] 
First, it follows directly that $u^* \geq \uhat$,
since for $u<\uhat$, it holds that $r'(u; f) > 0$.
Second, referring to Figure \ref{fig:proof of maxima}, for each $\epsilon > 0$, moving to $u^* + \epsilon$ increases area 1 and decreases area 3. Area 2 changes as well but is mutual to both terms.
Therefore, in the range $(u^*, t]$, the following inequality holds:
\[
f(u)u^2 < \int_{0}^{u} f(u') \cdot u' \, du'
\]
This implies that within the range $(u^*, t]$, the derivative $r'(u; f) < 0$. Consequently, $u^*$ is the unique revenue maximizer by definition.

\textbf{Case II: $D(u)$ contains no maximum point.}  
If $D(u)$ is strictly increasing over $[0,t]$, it follows that $f(u)u^2 > \int_{0}^{u} f(u')u' \, du'$ for all $u \in [0,t]$. Consequently, $r'(u; f) > 0$ for all $u$, which, by Observation \ref{obs:r' always positive}, indicates that the unique revenue maximizer is $u^* = t$. Moreover, in this case $\argmax_u D(u) = t$, meaning that $u^* = \argmax_u D(u)$

Conversely, if $D(u)$ is strictly decreasing over $[0,t]$, it follows that $f(u)u^2 < \int_{0}^{u} f(u')u' \, du'$ for all $u \in [0,t]$. Consequently, $r'(u; f) < 0$ for all $u$, which, by Observation \ref{obs:r' always negative}, indicates that the unique revenue maximizer is $u^* = 0$. Moreover, in this case $\argmax_u D(u) = 0$, meaning that $u^* = \argmax_u D(u)$.
\end{proof}

\begin{SCfigure}[1.2][t!]
\centering
\includegraphics[width=0.4\textwidth]{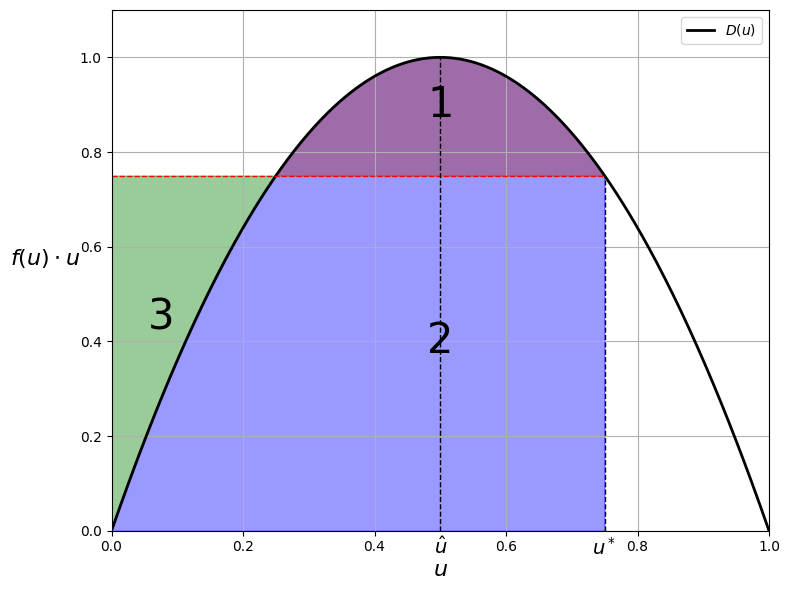}
\caption{Areas 1 and 2 contribute to $\int_{0}^{u^*} f(u')u' \, du'$, while areas 2 and 3 yield $f(u^*) {u^*}^2$. At $u = u^*$, we have  
$\int_{0}^{u^*} f(u')u' \, du' = f(u^*) {u^*}^2$, which implies that area 1 equals area 3. Furthermore, for each $\epsilon > 0$, shifting to $u^* + \epsilon$ increases area 1 and decreases area 3. Area 2 also changes but remains common to both terms. Thus, for each $\epsilon > 0$, at $u_{\epsilon} = u^* + \epsilon$, it follows that $\int_{0}^{u_{\epsilon}} f(u')u' \, du' > f(u_{\epsilon}) u_{\epsilon}^2$.}
\label{fig:proof of maxima}
\end{SCfigure}

We note that the proof can be easily extended to distribution in the range $[a,t]$, for $a>0$. The changes to the original proof are minor, and include modifying the lower limit of the integration. For distributions in range $[a,\infty]$, the proof is valid too, as long as $D(u)$ is strictly increasing, decreasing, or unimodal.

The proof also give a lower bound on $u^*$ in terms of $f$:
\begin{corollary}
Under all conditions of Thm.~\ref{thm:unique_rev_argmax},
it holds that $u^* \ge \uhat$.
\end{corollary}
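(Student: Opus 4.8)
The plan is to read the bound off the sign analysis of $r'(u;f)$ that is already carried out in the proof of Thm.~\ref{thm:unique_rev_argmax}, so that the corollary follows with essentially no new work. Writing $D(u) = f(u)u$ for the weighted density, recall that the revenue derivative satisfies $r'(u;f) > 0$ precisely when $D(u)\cdot u > \int_0^u D(u')\,\d u'$, i.e. when the current value of $D$ exceeds its running average $\frac{1}{u}\int_0^u D(u')\,\d u'$ over $[0,u]$. By definition $\uhat = \argmax_u D(u)$ is the mode of the weighted density, and under each of the three hypotheses $D$ is strictly increasing on the interval $[0,\uhat]$: this interval degenerates to the point $\uhat=0$ when $D$ is strictly decreasing, and fills all of $[0,t]$ with $\uhat=t$ when $D$ is strictly increasing.

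The key step is to observe that strict monotonicity of $D$ on $[0,\uhat]$ forces $r'(u;f)>0$ throughout $(0,\uhat)$. Indeed, for any $u \in (0,\uhat)$ and any $u' < u$ we have $D(u') < D(u)$ by strict increase, so the running average is strictly smaller than $D(u)$; multiplying through by $u$ yields $\int_0^u D(u')\,\d u' < D(u)\cdot u$, which is exactly the condition $r'(u;f)>0$. Hence revenue is strictly increasing on $(0,\uhat)$, and a function strictly increasing on that interval cannot attain its maximum at any point interior to it. Therefore the maximizer $u^*$ must satisfy $u^* \ge \uhat$, which is the claimed bound.

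To finish, I would verify that the two boundary regimes are consistent with the statement: in the strictly decreasing case $\uhat=0$, so $u^* \ge \uhat$ is immediate (and the proof's Observation~\ref{obs:r' always negative} even gives $u^*=0$), while in the strictly increasing case $\uhat=t$ and the earlier analysis via Observation~\ref{obs:r' always positive} gives $u^*=t$, so the inequality holds with equality. I do not anticipate a genuine obstacle here, since the corollary is a repackaging of the intermediate inequality established inside the theorem's proof; the only point needing mild care is the uniform treatment of the three monotonicity regimes and the endpoint behaviour of $\uhat$, ensuring that ``strictly increasing on $[0,\uhat]$'' is interpreted correctly and non-vacuously in each case (in particular when $t=\infty$, where one argues with the approached supremum rather than an attained maximum).
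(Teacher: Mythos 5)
Your argument is correct and is essentially the paper's own: the corollary is read off the sign analysis in the proof of Thm.~\ref{thm:unique_rev_argmax}, where strict increase of $D(u)=f(u)u$ below $\uhat$ forces $f(u)u^2 > \int_0^u f(u')u'\,\d u'$ and hence $r'(u;f)>0$ for all $u<\uhat$, so the revenue maximizer cannot lie below $\uhat$. Your extra care with the degenerate monotone regimes (where $\uhat=0$ or $\uhat=t$) is consistent with the paper's Observations on the sign of $r'$ and adds nothing that conflicts with its treatment.
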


The following examples show this relation explicitly for two classes of distributions: Beta, and uniform.

\subsubsection{Example: Beta distribution}


Based on Theorem \ref{thm:unique_rev_argmax}, we establish the following result about Beta distributions:

\begin{theorem}
    For every $a, b > 0$, let $f(u)$ denote the probability density function (PDF) of the Beta distribution $\text{Beta}(a, b)$, defined on the interval $[0,1]$. Then, the function $D(u) = f(u) \cdot u$ is either strictly increasing or strictly unimodal.
\end{theorem}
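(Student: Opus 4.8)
The plan is to write $D(u) = f(u)\,u$ explicitly from the Beta density and study its behavior through the logarithmic derivative. On $(0,1)$ we have $f(u) = \tfrac{1}{B(a,b)} u^{a-1}(1-u)^{b-1}$, so
\[
D(u) = \frac{1}{B(a,b)}\, u^{a}(1-u)^{b-1},
\]
which is, up to a positive multiplicative constant, the density of a $\Betadist(a+1,b)$ distribution. Since $D$ is smooth and strictly positive on the open interval, the sign of $D'$ agrees everywhere with the sign of
\[
g(u) \coloneqq \frac{D'(u)}{D(u)} = \frac{a}{u} - \frac{b-1}{1-u},
\]
so the entire argument reduces to understanding where $g$ is positive and where it is negative. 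I would then split into two cases according to the value of $b$.

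First I would handle $0 < b \le 1$. Here $b-1 \le 0$, so $-\tfrac{b-1}{1-u} = \tfrac{1-b}{1-u} \ge 0$ while $\tfrac{a}{u} > 0$, giving $g(u) > 0$ throughout $(0,1)$ and hence $D' > 0$; thus $D$ is strictly increasing. (Because $a > 0$ we also have $D(u)\to 0$ as $u\to 0$, so there is no pathology at the left endpoint even when $a<1$ and $f$ blows up there.) Next I would handle $b > 1$. Now $\tfrac{a}{u}$ is strictly decreasing in $u$ and $\tfrac{b-1}{1-u}$ is strictly increasing, so $g$ is strictly decreasing on $(0,1)$. Solving $g(u)=0$ yields the unique root $u^* = \tfrac{a}{a+b-1}$, which lies in $(0,1)$ since $a>0$ and $a+b-1 > a$. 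Strict monotonicity of $g$ forces $g>0$ on $(0,u^*)$ and $g<0$ on $(u^*,1)$, so $D$ increases then decreases and is therefore strictly unimodal with peak at $u^*$. Together the two cases cover all $a,b>0$ and prove the claim.

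I expect no serious obstacle: once the log-derivative is computed, everything reduces to elementary monotonicity of the rational function $g$. The only points deserving care are (i) verifying that the two cases $b\le 1$ and $b>1$ are genuinely exhaustive of the possible shapes---in particular that $D$ can never be strictly \emph{decreasing}, which is guaranteed by $D(u)\to 0$ together with $g>0$ near $0$---and (ii) confirming in the second case that the single interior stationary point is a maximum rather than a stationary inflection, which follows immediately from $g$ being strictly decreasing as it passes through its zero.
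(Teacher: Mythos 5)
Your proof is correct and follows essentially the same route as the paper's: both compute the derivative of $D(u)=K\,u^{a}(1-u)^{b-1}$, identify the unique interior critical point $u=\tfrac{a}{a+b-1}$, and split into the strictly-increasing case versus the unimodal case according to whether that point lies in $(0,1)$ (equivalently, $b\le 1$ versus $b>1$). Your use of the logarithmic derivative $g=D'/D$ is only a cosmetic repackaging of the paper's factorization $D'(u)\propto u^{a-1}(1-u)^{b-2}\bigl[a(1-u)-u(b-1)\bigr]$, though it does make the sign analysis and the exhaustiveness of the two cases a bit cleaner.
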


The following theorem implies that every Beta distribution has a unique revenue maximizer, which is confirmed empirically over different Beta distributions in Figure \ref{fig:beta_dists-all}.

\begin{proof}
    The PDF of $\text{Beta}(a, b)$ is given by:
    \[
    f(u) = K_{a,b} u^{a-1} (1-u)^{b-1},
    \]
    where $K_{a,b} > 0$ is a normalization constant that depends only on $a$ and $b$.
    Thus, $D(u) = f(u) \cdot u = K_{a,b} u^a (1-u)^{b-1}$. We will show that $D(u)$ is either strictly increasing, or strictly unimodal.
    First, compute the derivative of $D(u)$ w.r.t. $u$:
    \[
    \frac{d}{du} D(u) = K_{a,b} \left[ a u^{a-1} (1-u)^{b-1} - u^a (b-1) (1-u)^{b-2} \right]
    \]
    Simplifying the expression gives:
    \[
    \frac{d}{du} D(u) = K_{a,b} u^{a-1} (1-u)^{b-2} \left[ a(1-u) - u(b-1) \right]
    \]
    and setting $\frac{d}{du} D(u) = 0$, we solve:
    \[
    K_{a,b} u^{a-1} (1-u)^{b-2} \left[ a(1-u) - u(b-1) \right] = 0
    \]
    The solutions are:
    \[
    u_1 = 0, \quad u_2 = 1, \quad u_3 = \frac{a}{a+b-1}
    \]

    Here, $u_1$ exists if $a > 1$, and $u_2$ exists if $b > 2$ (otherwise, they are undefined due to the powers of $u^{a-1}$ and $(1-u)^{b-2}$). Since the Beta distribution is defined on $[0,1]$ and $D(u_1)=D(u_2)=0$, we focus on $u_3$. For $u_3 \in [0,1]$, it must hold that $b > 1$.

    Next, observe that for all $u < u_3$, the term $a(1-u) - u(b-1) > 0$. Therefore, $D'(u) > 0$ and $D(u)$ increases in this range. The proof now splits into two cases:

    \textbf{Case 1: $u_3 \notin [0,1]$.}  
    In this case, $D(u)$ is strictly increasing over the interval $[0,1]$, because $D'(u) > 0$ throughout $[0,1]$ and the proof is complete.

    \textbf{Case 2: $u_3 \in [0,1]$.}  
    For $u_3 < u < 1$, the term $a(1-u) - u(b-1) < 0$. Therefore, $D'(u) < 0$ and $D(u)$ decreases in this range. Thus, $u_3$ is the sole maximum point of $D(u)$, which implies that $D(u)$ is strictly unimodal, as required.
\end{proof}

As shown in the proof of Theorem \ref{thm:unique_rev_argmax}, if \( D(u) \) is strictly increasing, the revenue maximizer occurs at the right edge of the distribution, which in this case is at \( u = 1 \). If \( D(u) \) is strictly unimodal, we know that the revenue maximizer is greater than the maximum point of \( D(u) \). In this case, the maximum point is \( \frac{a}{a+b-1} \) (noting that \( b > 1 \) in this scenario).  

Moreover, the maximum point of a Beta distribution with parameters \( a, b \) is given by \( \argmax_u f(u) = \frac{a-1}{a+b-2} \). For \( b > 1 \), we obtain:
\[
\frac{a}{a+b-1} > \frac{a-1}{a+b-2},
\]

which implies that the percentile of \( \arg\max_u D(u) \) is greater than the percentile of \( \arg\max_u f(u) \), and both are smaller than the percentile of \( u^* \).  

This analysis provides an intuition for the unequivocal empirical results shown in Figure \ref{fig:beta_dists-all}, which demonstrate that the revenue maximizer is at least in the 80th percentile (and typically even higher). To conclude, under this family of distributions, when they induce individual demands for a feature under a uniform budget, a large percentage of users will be able to afford purchasing the amount of the feature they need.

\subsubsection{Example: Uniform distribution}
We now perform a similar analysis for the uniform distribution over the range $[0, t]$, where $t > 0$. The PDF of this distribution is constant for all $u$: $f(u) = \frac{1}{t}$. Consequently, $D(u) = f(u)u = \frac{1}{t}u$ is a strictly increasing function of $u$. By Theorem \ref{thm:unique_rev_argmax}, the revenue maximizer for this distribution is the right edge of the range, which is $t$.

We can extend this result to the uniform distribution over the range $[a, b]$.

\begin{theorem}
    For any $a, b > 0$, let $f(u)$ denote the probability density function (PDF) of the uniform distribution over $[a, b]$. Then, the revenue maximizer is unique and occurs at $b$.
\end{theorem}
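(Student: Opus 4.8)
The plan is to reduce this to the already-established Theorem~\ref{thm:unique_rev_argmax} via the auxiliary function $D(u) = f(u)u$. First I would write the pdf of the uniform distribution on $[a,b]$ explicitly as $f(u) = \frac{1}{b-a}$ for $u \in [a,b]$ (and $0$ elsewhere), so that $D(u) = \frac{u}{b-a}$ on the support. Since this is affine with positive slope, $D(u)$ is strictly increasing on $[a,b]$, placing us squarely in the strictly-increasing branch (Case II) of the proof of Theorem~\ref{thm:unique_rev_argmax}.

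Next I would invoke the extension of Theorem~\ref{thm:unique_rev_argmax} to supports of the form $[a,t]$ with $a>0$, noted in the remark following its proof: the only change is that the lower limit of integration in $r(u;f)$ becomes $a$ instead of $0$, which leaves the sign argument intact. Concretely, strict monotonicity of $D$ gives $f(u)u^2 > \int_a^u f(u')u'\,du'$ throughout the interval, hence $r'(u;f) > 0$, and by Observation~\ref{obs:r' always positive} the revenue maximizer is unique and attained at the right edge, namely $b$.

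As a self-contained sanity check that avoids leaning on the extension, I would also compute the revenue directly: $r(u) = \frac{1}{u}\int_a^u \frac{u'}{b-a}\,du' = \frac{u^2-a^2}{2u(b-a)} = \frac{1}{2(b-a)}\left(u - \frac{a^2}{u}\right)$, whose derivative $r'(u) = \frac{1}{2(b-a)}\left(1 + \frac{a^2}{u^2}\right)$ is strictly positive on $[a,b]$. This confirms directly that $r$ is strictly increasing and hence maximized uniquely at $b$.

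I do not anticipate a genuine obstacle, since $D$ is manifestly monotone and the machinery of Theorem~\ref{thm:unique_rev_argmax} transfers almost verbatim. The only point deserving a line of care is confirming that shifting the support from $[0,t]$ to $[a,t]$ preserves Observation~\ref{obs:r' always positive}; this is immediate because moving the lower integration limit from $0$ to $a$ leaves the structure of the derivative computation unchanged, and the direct calculation above makes the conclusion fully rigorous even without appealing to the extension.
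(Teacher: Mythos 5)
Your proposal is correct and, in its ``sanity check'' portion, reproduces the paper's own proof essentially verbatim: the paper also computes $r(u;f)=\frac{1}{2(b-a)}u-\frac{a^2}{2(b-a)}\frac{1}{u}$, observes $r'(u;f)=\frac{1}{2(b-a)}+\frac{a^2}{2(b-a)}\frac{1}{u^2}>0$, and concludes via the observation that $r'>0$ forces the unique maximizer to the right endpoint $b$. The additional reduction through the monotonicity of $D(u)=f(u)u$ and the $[a,t]$ extension of the general theorem is also valid but redundant given the direct calculation.
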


\begin{proof}
    The PDF of the uniform distribution is constant for all $u$: $f(u) = \frac{1}{b-a}$. The function $r(u; f)$ is therefore given by:
    \[
    r(u; f) = \frac{1}{u} \int_{a}^{u} f(u') \cdot u' \, du' = \frac{1}{u} \int_{a}^{u} \frac{1}{b-a} \cdot u' \, du' = \frac{1}{u(b-a)} \int_{a}^{u} u' \, du'
    \]
    Evaluating the integral:
    \[
    r(u; f) = \frac{1}{u(b-a)} \left[ \frac{u^2}{2} - \frac{a^2}{2} \right] = \frac{1}{2(b-a)} u - \frac{a^2}{2(b-a)} \frac{1}{u}
    \]

    Next, compute the derivative of $r(u; f)$ with respect to $u$:
    \[
    r'(u; f) = \frac{1}{2(b-a)} - \frac{a^2}{2(b-a)} \cdot \left(-\frac{1}{u^2}\right) = \frac{1}{2(b-a)} + \frac{a^2}{2(b-a)} \frac{1}{u^2}
    \]
    Since both terms are positive for all $u \in [a, b]$, it follows that $r'(u; f) > 0$ for all $u \in [a, b]$

    By Observation \ref{obs:r' always positive}, the revenue maximizer is unique and occurs at the right edge of the range, which is $b$.
\end{proof}





\subsection{Empirical prices}



Our analysis above considers expected prices defined over a demand distribution.
But in practice, learning must work with finite samples,
and therefore with empirical markets.
We begin by investigation some features of empirical markets, revenue, and prices,
and then make the connection to population markets with expected revenue and prices.


\subsubsection{The price-revenue landscape}
Thm.~\ref{thm:price_is_point} states that the revenue-maximizing price $\rhohat$ is always some $u_i^{-1}$;
hence, we can instead think of revenue as a function of inverse prices $\frac{1}{\rho}=u$,
measured in demand units.
This is useful since we can now consider directly how changes in the demand set affect
revenue, and through it, the optimal price.

Fig.~\ref{fig:rev_samples} plots empirical revenue $\rhat(\smplst)$ for three different samples $\smplst_j$ of size $m=10$ with units $u_i$ scaled to span $[1,10]$:

Note that revenue always begins at $\frac{1}{m}$ for the smallest $u_i$
since only one unit is sold (to one user) at price $\rho=1$.
From here, however, outcomes can differ considerably across samples,
in terms of the shape of the revenue curve,
the location and index of the price setter $i^*$,
and the optimal price $\rhohat$.

This raises the question: how sensitive are market prices to variation in demand?
For this, we take a sample $u_1,\dots,u_m$,
and measure how prices change due to adding a single new point $u_0$.
Fig.~\ref{fig:sensitivity} shows the outcome of this process for a fixed select demand set of size $m=5$ and for an increasing value of an additional point $u_0 \in [1,10]$ (x-axis):

\begin{figure}[t!]
\centering
\begin{minipage}{0.46\textwidth}
\centering
\includegraphics[width=0.95\textwidth]{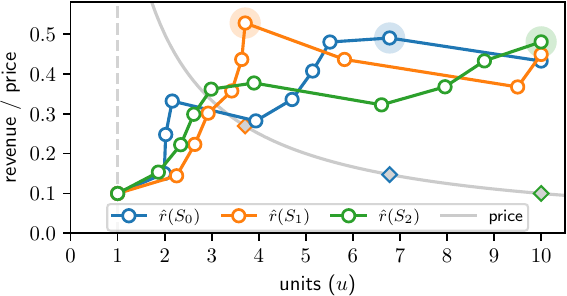}
\captionof{figure}{%
Empirical revenue curves $\rhat$ for different samples $S_i$
featuring different revenue-maximizing points $u^*$.
}
\label{fig:rev_samples}
\end{minipage} \quad
\begin{minipage}{0.46\textwidth}
\centering
\includegraphics[width=0.95\textwidth]{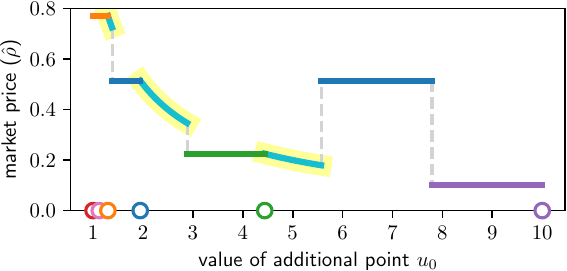}
\captionof{figure}{%
The effect on price of adding a single point
to a fixed sample. Original sample points shown on x-axis.
}
\label{fig:sensitivity}
\end{minipage}
\end{figure}

The $u_i$ are shown in color and positioned on the x-axis.
The revenue curve includes segments colored according to the matching price setter, with turquioise (and yellow highlight) indicating that the price setter is $u_0$.
As can be seen, the value of $u_0$ has a stark effect on market prices:
even though it is increased gradually,
prices jump at discrete points whenever the price-setter $i^*$ changes.
Generally, prices are down-trending, and $i^*$ appear in increasing order of $u_i$---but this is not necessary, as prices can also jump up,
and some $s_i$ can be price-setters more than once.

\subsubsection{Why prices jump}
One reason for this behavior is that optimal prices
may not be unique.
The following is an extreme construction in which \emph{all}
points are revenue-maximizing.
\begin{proposition}
\label{prop:all_points_max_revenue}
Let $m \ge 3$, and
w.l.o.g. assume uniform budgets.
Define $u_1,\dots,u_m$ recursively as:
\[
u_i = u_2 \cdot \sum\nolimits_{j < i} u_j,
\quad u_2=2,
\quad u_1=1
\]
Then for all $i>1$, prices $\rho_i=u_i^{-1}$ attain the same revenue.
\end{proposition}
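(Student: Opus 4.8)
The plan is to turn the defining recursion into a closed identity for the prefix sums and then read off the revenue directly from the pricing formula underlying Theorem~\ref{thm:price_is_point}. First I would record that the sequence is strictly increasing: since $u_i = u_2\sum_{j<i}u_j$ with $u_2=2>0$ and all earlier terms positive, each $u_i$ strictly exceeds the preceding partial sum and hence strictly exceeds $u_{i-1}$; together with $u_2=2>1=u_1$ this gives $u_1<u_2<\dots<u_m$. This monotonicity matters because Algorithm~\ref{algo:exact_market_prices} operates on the \emph{sorted} order, so it guarantees that the given index order is already the sorted order, with no ties. Consequently, at the candidate price $\rho_i=u_i^{-1}$ the purchasing set is exactly the prefix $\{1,\dots,i\}$: with uniform budgets we have $\ubar=u$ up to the common scale, so a user affords her demand iff her unit count does not exceed $u_i$.

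Next I would invoke the revenue expression that Theorem~\ref{thm:price_is_point} rests on: under uniform budgets the revenue at $\rho_i=u_i^{-1}$ equals $r_i = u_i^{-1}\sum_{j=1}^{i}u_j$. The crucial simplification is that the recursion is precisely a statement about prefix sums: rearranging $u_i=u_2\sum_{j<i}u_j$ gives $\sum_{j<i}u_j = u_i/u_2$, so the full prefix sum satisfies $\sum_{j=1}^{i}u_j = u_i/u_2 + u_i = u_i(1+1/u_2)$. Substituting into $r_i$, the factor $u_i$ cancels and leaves $r_i = 1+1/u_2$, which is independent of $i$; with $u_2=2$ this is $r_i=3/2$ for every $i>1$. (Note $r_1 = u_1^{-1}u_1 = 1 \neq 3/2$, consistent with the statement excluding $i=1$.)

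The only points needing genuine care — and the place I would expect a careless slip — are the bookkeeping at the base of the recursion and the monotonicity argument that legitimizes the prefix purchasing set. The relation $u_i=u_2\sum_{j<i}u_j$ must be seen to hold already at $i=2$, where it reads $u_2 = u_2 u_1$; this is consistent exactly because $u_1=1$, and it is what makes the identity $\sum_{j<i}u_j=u_i/u_2$ valid for all $i\ge 2$ rather than only $i\ge 3$. Given strict monotonicity and this base case, the remaining argument is a one-line cancellation, so I do not anticipate any real obstacle beyond stating the revenue formula correctly in the uniform-budget regime and confirming that the prefix structure of Theorem~\ref{thm:price_is_point} applies cleanly.
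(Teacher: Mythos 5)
Your proof is correct. I verified the key identity: rearranging the recursion gives $\sum_{j<i}u_j = u_i/u_2$ for all $i\ge 2$ (the base case $i=2$ holds precisely because $u_1=1$), so the prefix sum is $u_i(1+1/u_2)$ and the revenue $r_i = u_i^{-1}\sum_{j\le i}u_j = 1+1/u_2 = 3/2$ for every $i>1$, with the monotonicity argument correctly justifying that the purchasing set at $\rho_i=u_i^{-1}$ is the prefix $\{1,\dots,i\}$. The paper states this proposition without supplying a proof, so there is nothing to compare against; your argument is the natural one and fills that gap cleanly.
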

Together with Thm.~\ref{thm:price_is_point}, this implies that $\rhat$ 
is also maximized under all $\rho_i$.
To see how this lends to price jumps,
consider the minimal case of $m=3$.
If we slightly decrease $u_2$, then it becomes the unique price-setter;
in contrast, if we slightly increase $u_2$,
then $u_3$ becomes the price setter.
Thus, small perturbations in $u_2$ can cause prices to jump between
$\rho_2$ and $\rho_3$.

\extended{%
\blue{Of course another set with all maximizers is the set with $u_i=u$ for all $i>1$.
Interestingly, one attains the minimum revenue achievable with $m$ points, and the other attains the maximum.}

\blue{
Prop.~\ref{prop:all_points_max_revenue} is also suggestive of a more general structure.
For convenience, assume that for each $m$, the sequence $u_1,\dots,u_m$ is 
scaled (after its construction) to be in $[0,1]$.
This new sequence has an interesting limiting behavior:
as $m$ grows, we get that $u_{m-k} \to 1/3^k$.
}

\todo{is this also the minimal max-revenue set? can we prove it? is having all $u_i$-s be equal the max max-revenue set?}

\todo{add result on effect of outliers?}
} 


\subsubsection{Revenue for large samples}
Our examples above considered mostly very small market sizes.
Fortunately, prices tend to be more well-behaved
when the number of samples grows
as long as the underlying demand distribution is well-behaved.
For example, for $u \sim \Betadist(0.5,4)$,
(and scaled to $[1,10]$),
Fig.~\ref{fig:increasing_sample_size} presents
revenue curves (left) as well as maximal revenue (top right),and empirical market prices (bottom right)
for samples of increasing size $m$:
\squeeze

\begin{figure}[h!]
\centering
\includegraphics[width=0.53\linewidth]{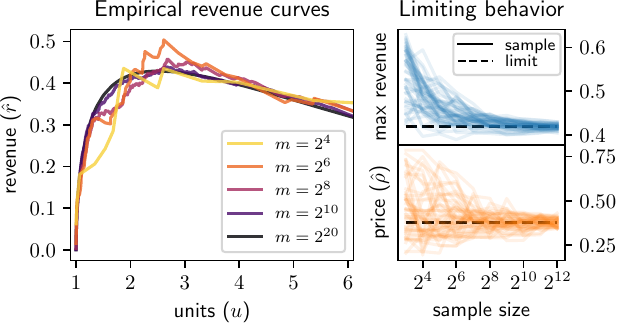}
\caption{
Revenue and prices for increasing sample size.
}
\label{fig:increasing_sample_size}
\end{figure}

As can be seen, despite significant variation under small $m$,
results stabilize as $m$ grows in terms of 
the revenue curve (left), its maximum value (top right),
and optimal prices (bottom right).

\extended{
\todo{the clustering hypothesis}
}



\section{Experiments - additional results} \label{appx:additional_experiments}

\subsection{Market hinge loss vs. 0-1 loss} \label{appx:mhinge_vs_01}
Our proposed m-hinge in Sec.~\ref{sec:method} permits both tractable optimization via gradient methods and without the need for explicitly computing best-responses $\brmrkt_h(x)$.
Due to \citet{levanon2022generalized}, it also intends to maximize a generalized notion of `strategic' margin (although their generalization bounds do not immediately carry to our setting due to dependence across users in the sample).
Here we examine the loss landscape for the m-hinge under a simple 1D example and compare it to the 0-1 loss, which it intends to approximate.

Data as generated as follows. Setting $d=1$, we sample each $x$ from a class-conditional distribution $x \sim \N(\mu_y,1)$.
Classes are balanced with $p(y=0)=p(y=1)=0.5$.
Budgets $\budget$ are sample uniformly from $[1,8.5]$ for negative points
and from $[8.5,16]$ for positive points.
This means that $b$ are generally in $[1,16]$ and correlate with $y$,
and so generally with $x$,
but do with $x$ given $y$ for either class.
We sample 1000 points in each setting.

Fig.~\ref{fig:mhinge_vs_01} illustrates the loss landscape for increasing class mean gaps $\mu_1-\mu_0 \in \{-1,0,1,2,3\}$.
For a range of thresholds $\thresh \in [-3,10]$,
each plot shows values of:
(i) the 0-1 loss,
(ii) the m-hinge with exact empirical prices $\rhohat$,
(iii) the m-hinge with smoothed empirical prices $\rhotilde$ (using a mild $\tempss=0.1$ and large $\tempsm=10$),
(iv) prices, and
(v) the price setter (in percentage from all points, sorted by their $x$ value).
Note prices and m-hinge values are scaled to fit the plot.

\begin{figure}[h!]
\centering
\includegraphics[width=\linewidth]{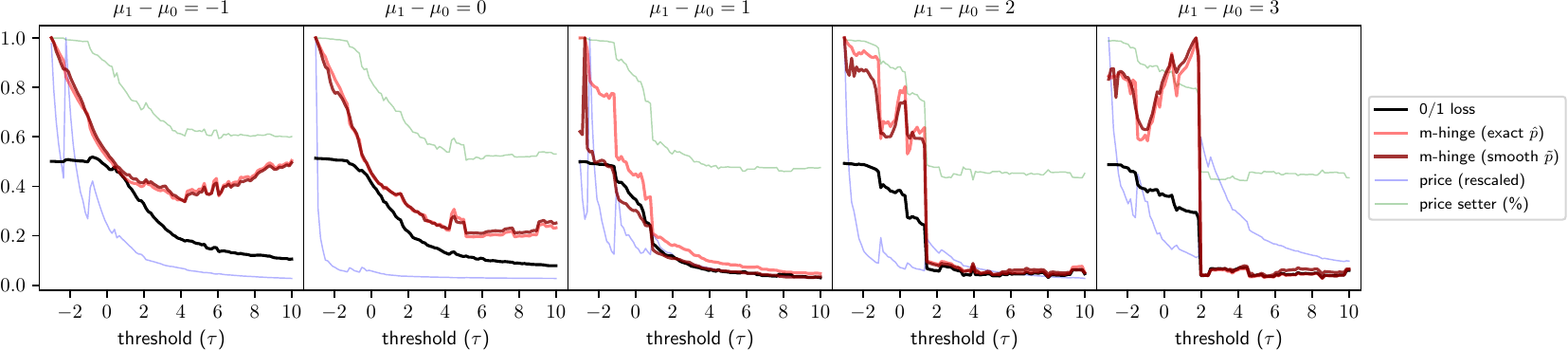}
\caption{
Market hinge loss landscape, compared to the 0-1 loss.
}
\label{fig:mhinge_vs_01}
\end{figure}

Overall the m-hinge appears to be an adequate proxy for the 0-1 loss.
When classes are reasonably distances at $\mu_1-\mu_0 =1$ (center),
the 0-1 loss decreases gracefully as $\thresh$ increases.
The m-hinge follows closely, but struggles for negative $\thresh$ due to large price variation. Once $\thresh$ reaches $\approx 1$, the price setter begins to stabilize at around 60\%; from this point on the m-hinge is well-behaved.
For larger gaps (right), there is an abrupt jump in the 0-1 loss once a certain $\thresh$ is reached. This can be seen both in prices peaking, and the price setter settling on 50\%. The 0-1 becomes very close to 0, and the m-hinge is faithful in this regard. Again we see that smaller $\thresh$ are more challenging for the m-hinge,
but note how soft prices help smooth the (non-linear) loss curve and reducing the number of sharp local minima.

Interestingly, even when the class gap is zero or negative (left)---i.e., the positive class lies mostly to the left of the negative class---the market mechanism enables to obtain low loss values.%
\footnote{This is surprising because our model class here includes only one-sided thresholds, $h_\thresh(x)=\one{x \ge \thresh}$.}
Here overall behavior is smoother for both the 0-1 loss and the m-hinge, 
due to smooth behavior of prices and price setters.

\subsection{Budgets and labels} \label{appx:budgets_and_labels}
Our empirical analysis in Sec.~\ref{sec:analysis} focused on cases where
budgets correlate with labels for a given $h$.
While our goal there was to reveal how the strength and types of correlation can affect market outcomes,
it does not imply that learning will always tend towards classifiers $h$ in which budgets and labels correlate along the direction that $h$ induces (or more precisely, on distances of negatively-classified points to the decision boundary).
Importantly, we note that what matter is not the correlation between labels and budgets per se, but rather, the relation between labels and the normalized demand---which results from how budgets `morph' distances to the decision boundary of a given $h$.

Here we demonstrate market outcomes on an example in which there is no correlation between $\budget$ and $y$. As we will see, what drives the classifier is user features in a way that circumvents dependence on budgets---which in this case, enables higher accuracy.

Our construction is as follows. Consider features in $\R^2$, and denote $x=(x_1,x_2)$. Points are generated such that each coordinate is sampled independently.
For all points (regardless of class) $x_1$ is drawn from the distribution \(\mathcal{N}(0, 0.4)\). For positive points $x_2$ is drawn from \(\mathcal{N}(1, 0.3)\), and for negative points $x_2$ was drawn from $\mathcal{N}(-1, 0.3)$.
Note the means of these distributions match labels, $\mu=2y-1$.
The base rate is set to $p(y=1)=0.25$.
Budgets were determined as:
\[
b(x) = \max(0.1,\, 2.5 \cdot x_1 + \epsilon), \quad \epsilon \sim \mathcal{N}(0, 0.2).
\]
i.e., budgets generally increase with $x_1$, but are noisy, and from above at capped at 0.1.
Note that by construction:
\begin{itemize}
\item $b$ is correlated with $x_1$, but independent of $x_2$.
\item $y$ is correlated with $x_2$, but independent of $x_1$.
\end{itemize}

We compare two threshold classifiers: $h_1$ which makes use only of $x_1$,
and $h_2$ which makes use only of $x_2$. Each was trained on its respective feature to attain the maximal strategic accuracy on its induced market.
Results are shown in Fig.~\ref{fig:budgets_and_labels_exp}:

\begin{figure}[h!]
\centering
\includegraphics[width=0.55\linewidth]{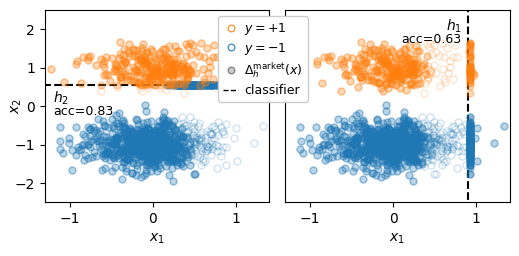}
\caption{
Revenue and prices for increasing sample size.
}
\label{fig:budgets_and_labels_exp}
\end{figure}

As can be seen, the classifier $h_2$ (which uses $x_2$; left), yields better accuracy than $h_1$ (which uses $x_1$; right).
Crucially, $h_2$ uses only $x_2$, which is independent of $b$,
and so the classifier does not rely on any correlations between labels and budgets (via distances).
Accuracy here is high since positive points are on the positive side of $h_2$,
and the market enables only a subset of negative points to cross.
Note that errors are precisely due to those negative points which do move:
these are points with high $x_1$ values,
and therefore those with higher budgets that permit cost-effective strategic movement.
Conversely, accuracy for $h_1$ is low because it cannot limit only negative points from crossing. This is because the distribution of budgets (and distances) is the same for both classes for any choice of threshold.

\subsection{The cluster hypothesis} \label{appx:cluster_hypothesis}
Our results in Sec.~\ref{sec:synth} demonstrated how for well-behaved demand distributions the price setter tends to be an extreme point.
This was complemented by our theoretical characterization in Appendix~\ref{appx:theory}.
Our conjecture is that the phenomena is broader, in that when the demand distribution comprises several "clusters", then the price setter will tend to be an extreme point of one of those clusters.
Fig.~\ref{fig:cluster_hyp} shows this holds for various mixtures of two Gaussians.
We vary the relations between the Gaussians along several dimensions,
such as distance from the decision boundary, gap between means, and variance.
For all considered settings, results show that (i) the price setter is indeed an (almost) extreme point of one of the clusters, and (ii) the price setter "jumps" between clusters at particular points as we vary the parameters of the setup.

\begin{figure}[b!]
\centering
\includegraphics[width=\linewidth]{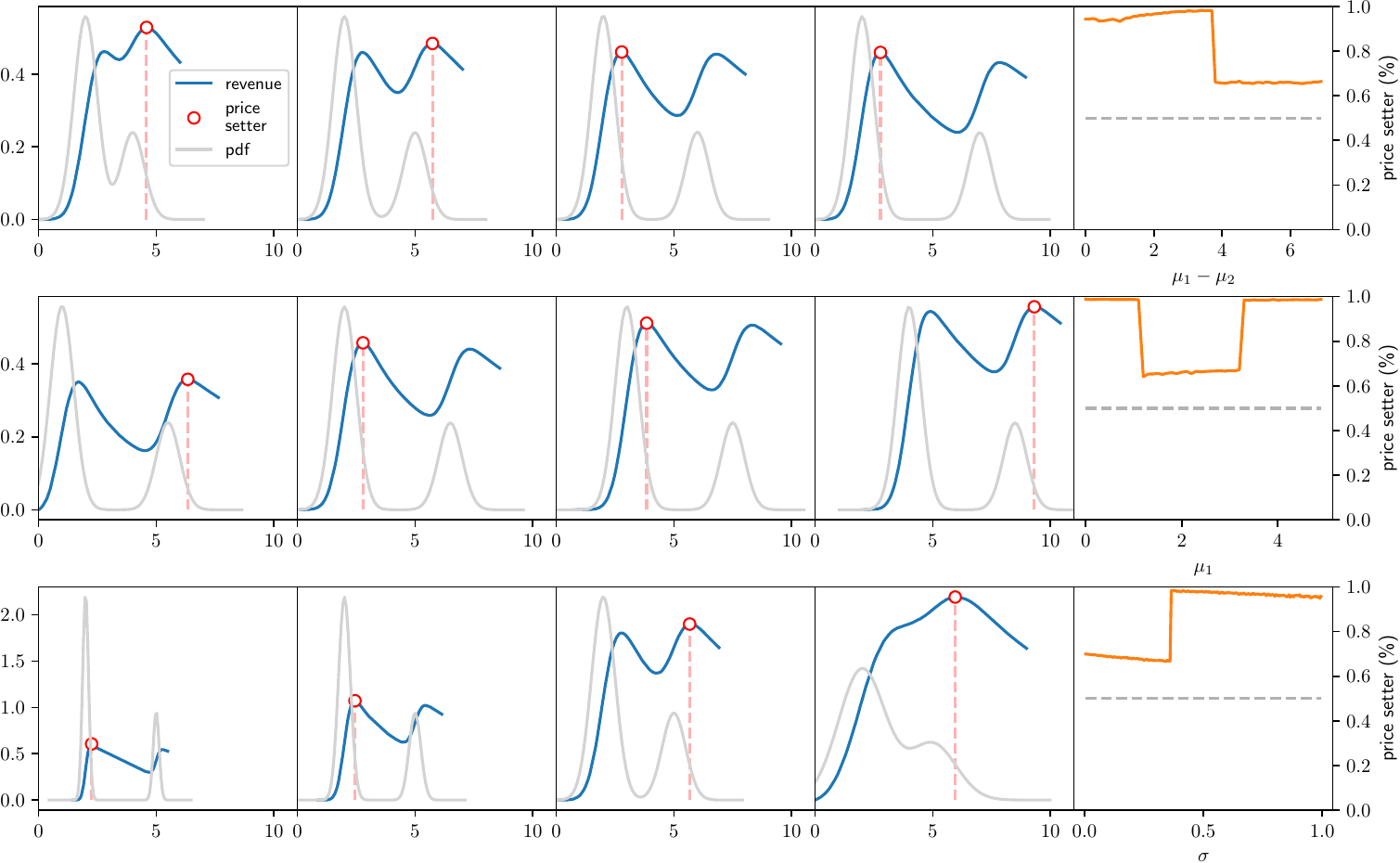}
\caption{
\textbf{The cluster hypothesis.}
We conjecture that when the demand distribution is a mixture of well-behaved "clustered" distribution, the price setter will be an extreme point of one of the clusters. Varying the relations between clusters causes the price setter to "jump" from one cluster to the other.
Here we vary:
the gap between means of the two components \textbf{(top)},
the distance of the entire distribution from the decision boundary \textbf{(middle)},
and the variance of each component \textbf{(bottom)}.
In each row, the four left plots show example distributions,
while the rightmost plot shows the price setter (in percentage from the sample) for the entire parameter range. Note how each variation shows distinct jumps across clusters.
}
\label{fig:cluster_hyp}
\end{figure}

\subsection{Markets induce separability (further exploration)} \label{appx:market_causes_separable}
Our results in Sec.~\ref{sec:market-aware_clsfr} revealed a surprising result:
that induced markets can make unseparable data become perfectly separable.
Here we show that this effect can be even more extreme and unintuitive.
Consider a univariate mixture distribution with
class-conditional distributions $p(x\,|\,y) = \N(\mu_y,\sigma)$.
We set $\sigma=0.15$ and will be interested in the effects of varying $\mu$.
We will examine both balanced data ($p_1 = p(y=1)$) 
and class-imbalanced data with $p_1 = p(y=1) = 0.3$.
For budgets, we set $b=b_1 y$ and show results for $b_1 \in \{1,1.5,2,2.5\}$.
Our model class will consist of threshold functions $h_\tau(x)=\one{x>\tau}$.
Note that we intentionally consider only thresholds that are oriented to 
classify larger $x$ as positive
(i.e., the class does not include `reverse' thresholds $\one{x < \tau}$).

Fig.~\ref{fig:b_corr_y} shows accuracies for the optimal threshold classifier
across increasing gaps between class-conditional means $\mu_1 - \mu_0$.
Note that a negative gap means that the positive distribution 
generates values that are mostly \emph{smaller} than the negative distribution.
The plot shows results for the range of budget scales $b_1$,
and plot the performance of a non-strategic benchmark (dashed grey).
As expected, the benchmark attains reasonable accuracy when the gap is large,
provides $p_1$ accuracy when the gap is zero (and the two distributions are superimposed),
and deteriorates quickly as the gap becomes more negative.
But for market-aware classifiers, this is not the case.
For $p_1=0.5$ (left), we see that for the larger budgets,
accuracy can be 1 \emph{even when the gap is negative}.
For smaller budgets, outcomes under a negative gap can be \emph{better}
than under a positive!
This latter behavior is much more distinct for $p_1=0.3$ (right),
where for all budget considered, a positive gap enables significantly lower accuracy
than moderate negative gaps allow.


\begin{figure}[t!]
\centering
\begin{minipage}{0.46\textwidth}
\centering
\includegraphics[width=0.95\textwidth]{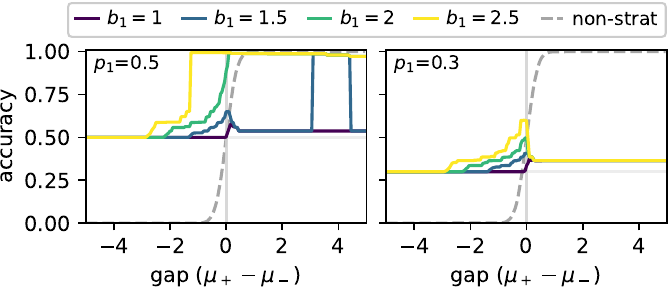}
\captionof{figure}{%
Accuracy of threshold classifiers on `inverted' data.
}
\label{fig:b_corr_y}
\end{minipage} \quad
\begin{minipage}{0.46\textwidth}
\centering
\includegraphics[width=0.95\textwidth]{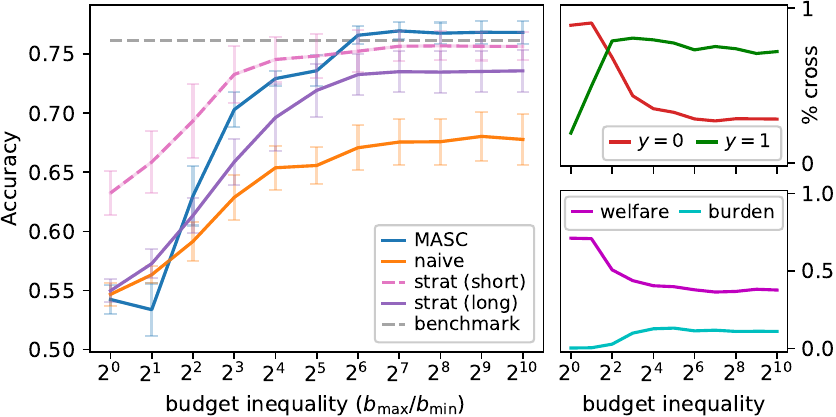}
\captionof{figure}{%
Results for the \folktables\ dataset.
}
\label{fig:folktables}
\end{minipage}
\end{figure}

\subsection{Main experiment: results on \folktables\ dataset} \label{appx:folktabels_results}
Here we reproduce our main experiment from Sec.~\ref{sec:experiments} on an additional dataset,
building on the \folktables\ data due to \citet{ding2021retiring}.
For the target variable $y$ we use a binarized version of the `employment status' feature (code ESR),
budgets we use the `total income' feature (code PINCP).
Appendix~\ref{appx:folktables_data} includes further details on the data and preprocessing.

Fig.~\ref{fig:folktables} shows the results.
Overall trends are similar to those of \adult\ in Fig.~\ref{fig:adult} from Sec.~\ref{sec:experiments},
though there are several notable distinctions.
As in \adult, here as well all methods improve with scale, and our \masc\ approach outperforms others---here from scale $\alpha=2^2$ and higher.
Note that improvement for \naivemthd\ is more pronounced.
One key different is that the benchmark is not only much higher, but also hard to improve upon even for \masc. This can be explained by the lower ratio of positive crossers, i.e., even at large scales it is hard to allow positives to cross while preventing negatives (top right).
Another difference is that \strat\ underperforms by a larger margin and across all scales. Note that there is also a larger and consistent gap between \shortmthd\ and \longmthd\ performance, suggesting stronger market adaptation.

\section{Differentiable market prices} \label{appx:learning_algorithm}

Our market-aware learning approach replaces exact market prices $\rho^*$ with a smooth surrogate $\rhotilde$ to enable differentiation.
This is achieved by modifying the exact pricing scheme in 
Algorithm~\ref{algo:exact_market_prices} to be differentiable.

One useful property of Algorithm~\ref{algo:exact_market_prices} is that each of its steps can be easily vectorized,
and each atomic operation is either already differentiable,
or can be made differentiable using existing smoothing methods.
In particular, note that:
(i) $\distance^+$ is a differentiable operator;
(ii) $\sort$ can be implemented as a linear operator $\Pi$ with $\Pi_{ij}=1$ if item $i$ is in position $j$ and 0 otherwise; and
(iii) $U$ can be computed using a cumulative sum,
implemented as linear operator $C$ with entries $C_{ij}=\one{i \le j}$.
The remaining non-differentiable operations are $\Pi$ and the argmax for choosing the revenue-maximizing point $i^*$.
Thus, if we replace $\Pi$ with a differentiable softsort operator $\Pitilde$ 
(e.g., using \citet{prillo2020softsort})
and the argmax with a softmax,
then the entire algorithm becomes differentiable.
These steps comprise Algorithm~\ref{algo:smooth_market_prices},
which returns smoothed market prices $\rhotilde$ as an approximation to $\rhohat$.
The final differentiable market price vector can be obtained as $\ptildevec = \rhotilde w$, but is unnecessary to compute when using our market hinge loss, which requires only $\rhotilde$.
Note both softsort and softmax operators require setting appropriate temperature parameters.


\begin{algorithm}[h!]
    \caption{Smoothed empirical market prices}
    \label{algo:smooth_market_prices}
    \begin{algorithmic}[1]
        \STATE \textbf{input:} unit-budeget pairs $\{(u_i,\budget_i)\}_{i=1}^n$ with $u_i>0 \, \forall i$
        \STATE $\ubarvec = (u_1/\budget_1, \dots, u_n/\budget_n)\,$
        \STATE $\Pitilde \gets \softargsort(\ubar)$
        \hskip0.085\columnwidth  \darkgray{$\triangleright$ \textit{approx. sorting matrix}}
        \STATE $\u_{\mathsmaller \Pitilde} \gets \Pitilde \u, \,\,\,
        \ubarvec_{\mathsmaller \Pitilde} \gets \Pitilde \ubarvec$
        \STATE $\gamma \gets \min \ubar$
        \STATE $\ubar \gets \ubar / \gamma$
        \hskip0.2\columnwidth  \darkgray{$\triangleright$ \textit{normalize demand}}
        \STATE $z \gets 1/\ubarvec_{\mathsmaller \Pitilde}$
        \STATE $\c \gets \cumsum(\u_{\mathsmaller \Pitilde})$
        \STATE $\r \gets z^\top \c$
        \STATE $\rhotilde = z^\top \softmax(\r) \cdot \gamma$
        \hskip0.075\columnwidth  \darkgray{$\triangleright$ \textit{de-normalize prices}}
        \STATE \textbf{return:} $\rhotilde$
    \end{algorithmic}
    \end{algorithm}


\paragraph{Normalization.}
In practice, we found it useful to normalize $\ubarvec$ 
so that its smallest entry is 1.
This is possible since market prices are insensitive to scale:
if $\rhohat$ is optimal for $\ubarvec$, then for a scaled $\alpha \ubarvec$,
the solution is $\frac{1}{\alpha} \rhohat$.
Normalizing ensures that all temperature parameters (e.g., in softsort and softmax)
operate at the same scale across all batches,
which is important since the relation $\rho = b/u$ suggests that even mild perturbations to small $u$-s can cause large variation in computed prices.

\paragraph{Truncated demand.}
Recall that demand is determined by the distances of all points the lie on the negative side of $h$. In principle, since points on the positive side are assigned $u=0$,
their presence does not affect prices.
However, when using soft prices, this does have a mild effect.
To see this, consider that softsort employs row-wise softmax operations that replace the argmax used to indicate the sorting position.
Since scores for all entries are exponentiated, points with $u=0$ now contribute $e^0=1$ to the denominator. This biases outcomes, and becomes significant when there are many positively classified points.
We circumvent this problem by simply
truncating all points with $u=0$ completely from the calculation.

\paragraph{Hyper-parameters.}
The smoothness of prices can be adjusting via two temperature hyper-parameters:
$\tempss$ for the soft sort operator, and $\tempsm$ for the final softmax.
In the limit, these recover the exact argsort and argmax operators, respectively.
Varying these temperatures therefore trades off approximation and smoothness (for optimization purposes).

\section{Experimental details} \label{appx:exp_details}

\subsection{Data and preprocessing} \label{appx:data}

\subsubsection{Adult}

\paragraph{Data description.}
Our main experiment makes use of the \adult\ dataset.
This dataset contains features based on census data from the 1994 census database that describe demographic and financial data. There are \lotanadd{14 features,  8 of which are categorical and the others numerical}. 
The binary label is whether a person's income exceeds \$50k.
The dataset includes a total of 48,842 entries,
76\% of which are labeled as negative.
The data is publicly available at \url{https://archive.ics.uci.edu/dataset/2/adult}.

\paragraph{Preprocessing.}
To make the data appropriate to our strategic market setting,
we took the following steps.
First, all rows with missing values were removed (7.4\%).
Two features were excluded: \texttt{native\_country},
and \texttt{education}, which had perfect correlation with the numerical feature \texttt{education\_num}.
The feature \texttt{capital\_gain} was not used as input to the classifier,
but rather as the basis of determining budgets.
Second, to maintain class balance, 25\% of negative examples were randomly removed.
Finally, because behavior in strategic classification applies to continuous features, for our main experiment we dropped all categorical features.
These however were still used for constructing budgets (see below).
The remaining numerical features were normalized.


\paragraph{Budgets.}
\lotanadd{For budgets $b$, we chose to use the \texttt{capital\_gain} feature; of all features, this most closely related to an indication of wealth.
Unfortunately, only 8.5\% (3,561) of the entries in the data contained values that were not 0, 99999, or NaN.
As such, we decided to replace such missing or extreme entries with imputed values, for which we trained two random forest models (one per class) on the valid subset of the data.
Hyper-parameters for this process were chosen using a grid search with the following parameters:
\texttt{n\_estimators} $\in$ \{50, 100, 200\}, 
\texttt{max\_depth} $\in$ \{None, 10, 20, 30\},
\texttt{min\_samples\_split} $\in$ \{2, 5, 10\},
\texttt{min\_samples\_leaf} $\in$ \{1, 2, 4\},
5 folds, and $R^2$ scoring.
}
\lotanadd{All features were used during imputation.}
The normalized RMSE was $0.366$ for positives and $0.679$ for negatives.

\paragraph{Data splits.}
\lotanadd{ We used a train-validation-test split of 70:10:20 and averaged the results over 10 random data splits.
}

\subsubsection{Folktables} \label{appx:folktables_data}
\paragraph{Data description.}
We reproduce the main experiment of Sec.~\ref{sec:experiments} using \folktables\ as an additional dataset \citep{ding2021retiring}---%
see Appendix.~\ref{appx:folktabels_results}.
The data is publicly available at
\url{https://github.com/socialfoundations/folktables}.

\paragraph{Features, budgets, and labels.}
We made use of the following features:
'AGEP' -- age (int),
'SCHL' -- educational Attainment (ordinal 0-24),
'MAR' -- marital statue (categorical encoded to 1-5),
'RELP' -- religious Affiliation (categorical 1-7),
'ESP' -- employment status of parents (categorical 0-7),
'CIT' -- citizenship status (categorical 1-5).
For the target variable $y$ we used 'ESR' -- employment status, converted into binary "employed" and "not employed".
For budgets $\budget$, we used 'PINCP' -- total person's income (float).

\paragraph{Preprocessing.}
All features where scaled to $[0,1]$.
Features that were negatively correlated with $y$ were flipped as $x_i \mapsto (1-x_i)$.
Examples with extreme or outlier budget values
(below 50 and above 50,000) where removed.

\paragraph{Data splits.}
Same as \adult.



\subsection{Evaluation}

\extended{
\paragraph{Baselines.}
\begin{itemize}
    \item
    \naivemthd: 
    \lotanadd{
    Conventional non-strategic model, trained using a standard logistic objective with scikit-learn.
    We used this both as a baseline (by testing on strategic market data)
    and as a benchmark (by testing on non-strategic data).
    }

    \item
    \strat: 
    \lotanadd{
    Strategic market-unaware model. The model determines a fixed price based on the validation set. It then optimizes the bias within the range $[\tau_{min}, \tau_{max}]$ using the training set and minimizing the market hinge loss. Finally, the model's accuracy is evaluated on the test set using the fixed price and adjusted bias.
    }

\end{itemize}
}

\paragraph{Metrics.}
In addition to accuracy, we measure the following metrics:

\begin{itemize}
\item
\textbf{Welfare:} Measures the profit (utility minus cost) for all users of the system as:
\begin{equation}
\label{eq:welfare}
\welfare(h) = \frac{1}{B}\sum_i \budget_i \one{h(x_i^h) = 1} - \cost(x_i,x_i^h)
\end{equation}

\item
\textbf{Social burden:} Measures the overall cost required to ensure that all deserving users (i.e., with $y=1$) rightfully obtain positive predictions ($\yhat=1$):
\lotanadd{
\begin{equation}
\label{eq:burden}
\burden(h) = \frac{1}{B_+} \sum_{i:y_i=1} \min_{x':h(x')=1} \cost(x_i,x')
\end{equation}
}

\end{itemize}
Here $B=\sum_i b_i$ is the total budgets,
\lotanadd{and $B_+=\sum_{i:y_i=1} b_i$ is the total budget of the positive examples.}
Since the different experimental settings vary considerably in the distribution of budgets as well as its total,
normalizing by $B$ and $B_+$ permits meaningful comparisons across conditions.

\subsection{Training, tuning, and optimization} \label{appx:optimization}

\paragraph{Implementation.}
All code was implemented in python,
and the learning framework was implemented using Pytorch.

\paragraph{Optimization.}
Our overall approach is to optimize the objective in Eq.~\eqref{eq:empirical_objective} using gradient methods. In particular, we use \lotanadd{ADAM} with mini-batch updates ---see details below.
Additional decisions and considerations:
\begin{itemize}

\item
The softsort and softmax hyper-parameters are intended to facilitate differentiable prices. In general, tuning their parameters should seek to optimally trade off between how well they approximate `hard' sort and argmax, and the effectiveness of gradients.
However, particular to our market settings, we observed that they also contribute to smoothing out discontinuities that result from sharp transitions between market states,
i.e., cases where a mild change in prices causes a large change in the number of points that move and cross---which can significantly affect the loss.

\item 
Similarly, we observed that mini-batches also have a smoothing effect on the market.
This however related to a different aspect:
Since market prices $\rhohat$ correspond to the normalized demand of one of the data points $\ubar_i^{-1}$, prices in general can be sensitive to the particular sample on which they are computed. Another concern if a small change in learned parameters move some points $x$ from being slightly above the decision boundary to slightly below it. If this occurs, then this new point has $u$ that is positive but very small,
which can affect soft prices (despite our normalization step in Algorithm~\ref{algo:smooth_market_prices}) through the choice of hyperparameters.
Mini-batches help in this regard because they average out the effect that any single data point may have. They are also helpful in cases when several points `compete' over setting the price (i.e., entail similarly large revenue) by permitting $\rhotilde$ to express their (weighted) averaged.


\end{itemize}





\paragraph{Initialization.}
The model was initialized with the weights and bias term of the \naivemthd\ model. Notably, initializing it with randomly generated weights from a normal distribution had minimal impact on the results.

\paragraph{Hyperparameters.}
We used the following hyperparameters:
\begin{itemize}[leftmargin=1em,topsep=0em,itemsep=0.1em]
\item 
Temperature $\tempss$ for the softsort operator: \lotanadd{0.001}

\item 
Temperature $\tempsm$ for the softmax operator: \lotanadd{0.01}

\item 
Batch size: \lotanadd{500}

\item Learning rate: \lotanadd{
    \begin{itemize}
    \item \adult: 0.001 for \texttt{budget\_scale} $\in$ [1, 32],
     0.01 for \texttt{budget\_scale} $\in$ [64, 1024]
    \item \folktables: 0.001 for all budget scales
    \end{itemize}
}

\item Regularization and coefficient: \lotanadd{0.1}

\item \lotanadd{Epochs: 100 for \adult, 1000 for \folktables}

\end{itemize}

Hyperparameters were chosen by standard hyperparameter search over a grid of possible combinations and were chosen based on performance on a \lotanadd{validation} set along with considerations for reasonable convergance times.


\end{document}